\documentclass[lettersize,journal]{IEEEtran}
\usepackage{amsmath,amsfonts,amssymb}
\usepackage{array}
\usepackage[caption=false,font=normalsize,labelfont=sf,textfont=sf]{subfig}
\usepackage{textcomp}
\usepackage{stfloats}
\usepackage{url}
\usepackage{verbatim}
\usepackage{graphicx}
\usepackage{cite}
\usepackage{xcolor}
\usepackage{colortbl}
\usepackage{multirow}
\usepackage{booktabs}
\usepackage{pifont}
\usepackage{adjustbox}
\usepackage{tabularx}
\usepackage{amsthm}
\usepackage{pgfplots}
\pgfplotsset{compat=1.18}
\usepackage{microtype}

\definecolor{cvprblue}{rgb}{0.21,0.49,0.74}
\usepackage[colorlinks=true, linkcolor=cvprblue, citecolor=cvprblue, urlcolor=cvprblue, pdfborder={0 0 0}]{hyperref}

\newtheorem{corollary}{Corollary}
\newtheorem*{corollary*}{Corollary}
\theoremstyle{remark}

\definecolor{darkblue}{RGB}{0,102,153}
\definecolor{darkred}{RGB}{153,0,0}
\definecolor{darkgreen}{RGB}{0,102,0}
\definecolor{purple}{RGB}{102,0,102}

\begin{document}

\title{MotionRFT: Unified Reinforcement Fine-Tuning for Text-to-Motion Generation}

\author{Xiaofeng~Tan*, Wanjiang~Weng*, Hongsong~Wang, Fang~Zhao, Xin~Geng, \emph{Senior Member, IEEE}, and Liang Wang, \emph{Fellow, IEEE}%
\IEEEcompsocitemizethanks{
			\IEEEcompsocthanksitem This work was supported by National Science Foundation of China (62302093, 52441503, 62125602, U24A20324, 92464301), Jiangsu Province Natural Science Foundation (BK20230833, BG2024036, BK20243012), the New Cornerstone Science Foundation through the XPLORER PRIZE, and the Fundamental Research Funds for the Central Universities (2242025K30024), the Open Research Fund of the State Key Laboratory of Multimodal Artificial Intelligence Systems under Grant E5SP060116, and the Big Data Computing Center of Southeast University. 
			\IEEEcompsocthanksitem X. Tan, W. Weng, H. Wang and X. Geng are with School of Computer Science and Engineering, Southeast University, Nanjing 211189, China, and also with Key Laboratory of New Generation Artificial Intelligence Technology and Its Interdisciplinary Applications (Southeast University), Ministry of Education, China (email: \{xiaofengtan, 220232322, hongsongwang, xgeng\}@seu.edu.cn).
			\IEEEcompsocthanksitem F. Zhao is with State Key Laboratory for Novel Software Technology, School of Intelligence Science and Technology, Nanjing University, Nanjing 210023, China (email: zhaofang0627@gmail.com). 
			\IEEEcompsocthanksitem L. Wang is with New Laboratory of Pattern Recognition (NLPR), State Key Laboratory of Multimodal Artificial Intelligence Systems (MAIS), Institute of Automation, Chinese Academy of Sciences (CASIA), and also with School of Artificial Intelligence, University of Chinese Academy of Sciences (email: wangliang@nlpr.ia.ac.cn).
            \IEEEcompsocthanksitem *Equal Contribution
		}
}

\markboth{Journal of \LaTeX\ Class Files,~Vol.~XX, No.~X, Month~2025}%
{Tan \MakeLowercase{\textit{et al.}}: MotionRFT: Unified Reinforcement Fine-Tuning for Text-to-Motion Generation}

\maketitle

\begin{abstract}
Text-to-motion generation has advanced with diffusion- and flow-based generative models, yet supervised pre-training remains insufficient to align models with high-level objectives such as semantic consistency, realism, and human preference. Existing post-training methods have key limitations:  they (1) target a specific motion representation, such as joints, (2) optimize a particular aspect, such as text-motion alignment, and may compromise other factors; and (3) incur substantial computational overhead, data dependence, and coarse-grained optimization. We present a reinforcement fine-tuning framework that comprises a heterogeneous-representation, multi-dimensional reward model, {MotionReward}, and an efficient, fine-grained fine-tuning method, {EasyTune}. To obtain a unified semantics representation, {MotionReward} maps heterogeneous motions into a shared semantic space anchored by text, enabling multi-dimensional reward learning; Self-refinement Preference Learning further enhances semantics without additional annotations. For efficient and effective fine-tuning, we identify the recursive gradient dependence across denoising steps as the key bottleneck, and propose {EasyTune}, which optimizes step-wise rather than over the full trajectory, yielding dense, fine-grained, and memory-efficient updates. Extensive experiments validate the effectiveness of our framework, achieving FID 0.132 at 22.10\,GB peak memory for MLD model and saving up to 15.22 GB over DRaFT. It reduces FID by 22.9\% on joint-based ACMDM, and achieves a 12.6\% R-Precision gain and 23.3\% FID improvement on rotation-based HY Motion. Our \href{https://xiaofeng-tan.github.io/projects/MotionRFT/}{project page} with code is publicly available.
\end{abstract}

\begin{IEEEkeywords}
Motion Generation, Efficient Reinforcement Fine-Tuning, Unified Reward, Multiple Preference Learning, Heterogeneous Motion
\end{IEEEkeywords}

\section{Introduction}\label{sec:intro}
\IEEEPARstart{T}{ext-to-motion} generation aims to synthesize realistic and coherent human motions from natural language, enabling applications in animation \cite{azadi2023make}, robotics \cite{mu2025smp}, and virtual reality \cite{tashakori2025flexmotion}. The rapid advancement of generative model \cite{ho2020denoising, lipman2023flow} has propelled significant progress in text-to-motion generation. Mainstream generative paradigms, including diffusion models~\cite{Ma2025Efficient,tevet2023human,chen2023executing,shen2025finextrolcontrollablemotiongeneration} and autoregressive models~\cite{jiang2023motiongpt,zhang2023generating,guo2023momask}, have demonstrated remarkable capabilities in synthesizing realistic human motions from natural language descriptions~\cite{Zhu2024Human}.

Despite significant advances in generation quality and efficiency, existing models~\cite{ho2020denoising,tevet2023human,chen2023executing,zhang2022motiondiffuse} are still primarily trained under supervised learning, fitting annotated data distributions~\cite{guo2022generating}. However, maximizing data likelihood does not necessarily optimize high-level objectives such as perceptual quality~\cite{motioncritic2025} or semantic consistency~\cite{tan2024sopo,tan2026consistentrftreducingvisualhallucinations}, thereby limiting real-world performance. As a result, current text-to-motion models still face considerable challenges in semantic alignment \cite{tan2024sopo}, human preferences \cite{motioncritic2025}, and motion realism \cite{HanReinDiffuse2025}.

Reinforcement learning (RL) \cite{schulman2017proximal} has been explored to align diffusion-based generation with task-specific goals via reinforcement learning from human feedback (RLHF) \cite{kirstain2023pick, ziegler2019fine}. Existing approaches include differentiable reward methods \cite{clark2024directly}, policy-gradient RL \cite{black2023training, xue2025dancegrpo, liu2025flow}, and direct preference optimization (DPO) \cite{Wallace_2024_CVPR}, which treat the denoising trajectory as a Markov Decision Process and optimize reward signals from motion discriminative models \cite{petrovich2023tmr,weng2025, wang2025foundation, wang2025heterogeneous, weng2025usdrl} to correct behaviors not addressed by supervised training~\cite{wu2025generalization}.

In motion generation, recent works have explored use RL to enhance the motion quality. Specifically, SoPo~\cite{tan2024sopo} constructs semi-online preference pairs for semantic consistency, MotionCritic~\cite{motioncritic2025} employs PPO for human-preference alignment, and ReinDiffuse~\cite{HanReinDiffuse2025} adopts rule-based rewards for motion realism.

However, these methods still face several limitations. \textbf{First}, they are typically developed for generative models with a specific motion representation, which makes it difficult to generalize to alternative representations. Beyond the widely used kinematic representation~\cite{guo2022generating}, recent studies~\cite{meng2025absolute, zhao2024dartcontrol, fan2025go, hymotion2025, barquero2024seamless} have explored generating motions directly in rotation  or joint-based representations ~\cite{zhao2024dartcontrol, hymotion2025, meng2025absolute} and have demonstrated improved scalability~\cite{hymotion2025}; however, existing methods do not readily transfer to these settings. \textbf{Second}, most prior approaches optimize for a specific aspect, such as text-motion alignment~\cite{tan2024sopo} or human preference~\cite{motioncritic2025}, which may bias the model toward that objective and potentially compromise other dimensions of motion quality. \textbf{Finally}, these methods often come with substantial computational overhead  \cite{wu2025generalization}, coarse-grained optimization, or heavy data dependence, which limits practical adoption in real-world scenarios.

\noindent \textbf{Contributions.} We propose a reinforcement fine-tuning framework for text-to-motion generation. It comprises a heterogeneous-representation, multi-dimensional reward model, \textit{MotionReward}, and an efficient and effective fine-tuning method, \textit{EasyTune}. Our main contributions are summarized as follows.

Our first contribution lies in a unified heterogeneous-representation reward model, MotionReward, for multi dimensional evaluation in text conditioned motion generation. MotionReward aims to overcome three challenges: \textbf{(1)} the discrepancy across heterogeneous motion representations that makes reward signals hard to compare and generalize, \textbf{(2)} the limitation of single objective rewards that fail to jointly capture motion realism, motion to text alignment, and human perceived preference, and \textbf{(3)} the scarcity of large scale, high quality preference annotations. To address the representation discrepancy, \underline{\textbf{(1)}} we establish a unified semantic representation, where the heterogeneous motion representations are mapped into a shared embedding space via lightweight linear projections and aligned with the textual anchor. This unified semantic representation serves as the natural text-motion alignment reward and provides a unified representation for heterogeneous motions. \underline{\textbf{(2)}} Building upon this unified semantic representation, we adopt a lightweight LoRA parameter to transfer the semantic representation into other reward domain, where the specific reward, such as human preference and motion authenticity, can be captured and learned in an efficient way. \underline{\textbf
{(3)}} To address the lack of large-scale datasets, we propose Self-refinement Preference Learning (SPL) to refine the pre-trained reward model without additional data. SPL dynamically constructs preference pairs by using pre-training samples as positives and failed retrieval results as negatives, and use them to fine-tune the pre-trained model. With this unified semantic modeling, MotionReward can be seamlessly applied to reinforcement fine-tuning of text-to-motion generators across kinematic-, rotation-, and joint-based representations in semantics, preference, and authenticity aspects.

Second, we analyze the drawbacks of existing differentiable reward based methods and propose an efficient reinforcement fine tuning method, EasyTune. Specifically, we theoretically, in Corollary~\ref{thm:t1}, and empirically, in Fig.~\ref{fig:memory}, identify the key factor of computational and memory overhead: optimization is recursively coupled across the multi step denoising trajectory, meaning that the reward of the final motion $\mathcal{R}_\phi(\mathbf{x}^\theta)$ depends on all intermediate states and the gradient $\nabla_\theta \mathbf{x}_t^\theta$ must backpropagate through the entire reverse chain, which requires storing the full computational graph. We further show in Fig.~\ref{fig:norm} that such coarse-grained chain optimization causes vanishing gradients, making early denoising steps difficult to improve. Fortunately, intermediate motions are more perceivable than intermediate images or videos, which makes step aware optimization feasible, as illustrated in Fig.~\ref{fig:motion_image_cmp} in Sec.~\ref{sec:motivation}. Building upon these, we propose {\emph{EasyTune}}, which performs reinforcement fine tuning at each denoising step to decouple gradients from the full trajectory, enabling dense and fine-grained optimization and expensive memory saving. To accurately estimate the reward of noisy intermediate motions, we design step specific reward perception mechanisms, including single step prediction rewards for ODE based models and noise-aware rewards for SDE based models. Together, these techniques enable efficient and effective reinforcement fine-tuning of text-to-motion generation.

\begin{figure}[t!]
    \centering
    \begin{minipage}[t]{0.48\columnwidth}
        \centering
        \includegraphics[width=\linewidth]{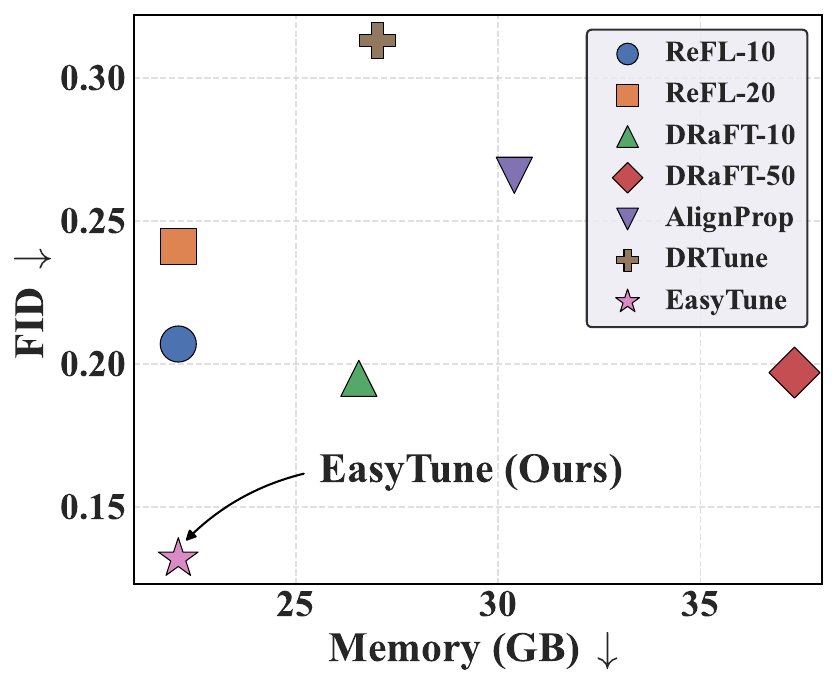}
    \end{minipage}
    \hfill
    \begin{minipage}[t]{0.48\columnwidth}
        \centering
        \includegraphics[width=\linewidth]{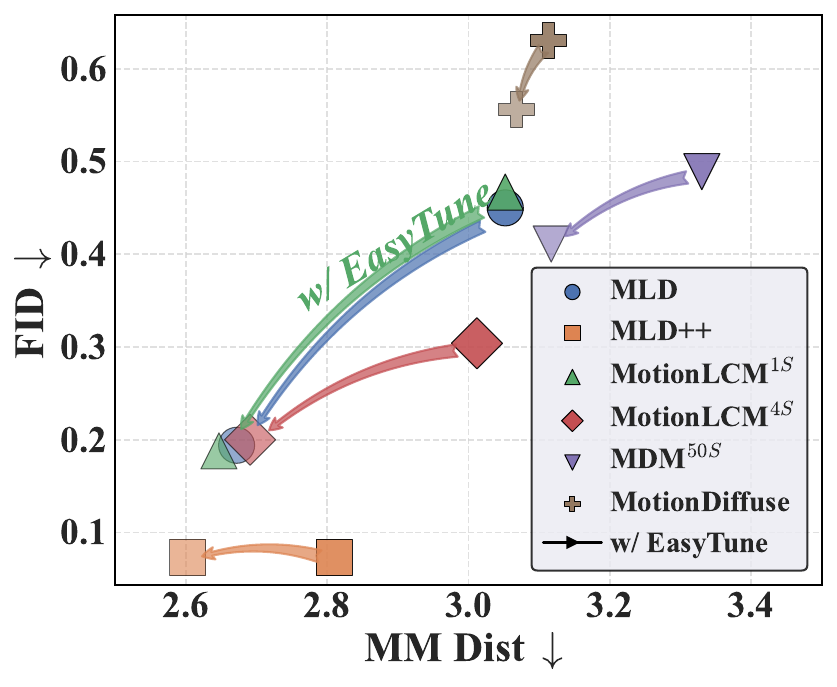}
    \end{minipage}
    
    \caption{ {Comparison of the training costs and generation performance on HumanML3D \cite{guo2022generating}.} {(a)} Performance comparison of different fine-tuning methods \cite{clark2024directly, prabhudesai2023aligning, wu2025drtune}. {(b)} Generalization performance across six pre-trained diffusion-based models \cite{chen2023executing, motionlcm-v2, Dai2025, tevet2023human, zhang2022motiondiffuse}.}
    
    \label{fig:intro}
\end{figure}

Finally, we evaluate {EasyTune} on HumanML3D~\cite{guo2022generating} and KIT-ML~\cite{plappert2016kit} across six pre-trained diffusion models. As shown in Fig.~\ref{fig:intro}, EasyTune achieves state-of-the-art performance with an FID of {0.132}, a {70.7\%} improvement over MLD~\cite{chen2023executing}, while reducing peak memory usage to {22.10}~GB. Moreover, we demonstrate that our reinforcement fine-tuning framework generalizes across motion representations. It is effective not only for kinematic-based models such as MLD~\cite{chen2023executing} and MDM~\cite{tevet2023human}, but also for joint-based models such as ACMDM~\cite{meng2025absolute}, achieving a {22.9\%} FID reduction in terms of generation quality, and for rotation-based models such as HY Motion~\cite{hymotion2025}, achieving a {12.6\%} R-Precision gain and {23.3\%} FID improvement, consistently improving generation quality and text-motion consistency.

This work is an extension of our published conference version~\cite{tan2026easytune}. Compared to the preliminary version, the major extensions are as follows:
\begin{itemize}
    \item We propose a unified motion reward model for post-training text-to-motion generation models across heterogeneous motion representations, including kinematic-, joint-, and rotation-based representations.
    \item We introduce a unified reward modeling framework in a shared semantic latent space, enabling a single model to jointly learn motion realism, text-to-motion alignment, and human-perceived preference.
    \item We introduce Curriculum Timestep Scheduling for EasyTune \cite{tan2026easytune} to balance the optimization between early and late  steps in diffusion/flow models.
    \item Beyond kinematic-based diffusion models, extensive experiments show that our method consistently improves generation quality and text-motion consistency across diverse architectures and motion representations, including rotation-based flow models such as HY Motion~\cite{hymotion2025} and joint-based diffusion models such as ACMDM~\cite{meng2025absolute}.
\end{itemize}

\section{Related Works}

\noindent \textbf{Text-to-Motion Generation.} Text-to-motion generation synthesizes human motion from text prompts and is commonly studied with three representations: kinematic-based~\cite{chen2023executing, guo2023momask}, rotation-based~\cite{holden2017phase, hymotion2025}, and joint-based formats~\cite{meng2025absolute}. Early methods largely follow HumanML3D~\cite{guo2022generating} and operate on kinematic-based local, relative features, making kinematic-based representations the prevailing settings~\cite{tevet2023human, zhang2022motiondiffuse, Dai2025, tan2024sopo}. Yet they may be redundant~\cite{meng2024rethinking} and suffer from error accumulation~\cite{meng2025absolute}. Rotation-based representations~\cite{martinez2017human, petrovich2021actor} have also been widely explored, including Euler angles~\cite{pavllo2018quaternet}, exponential maps~\cite{martinez2017human}, and axis--angle rotations~\cite{pavlakos2019expressive}. More recently, Meng et al.~\cite{meng2024rethinking} investigate directly generating motion in joint-based representations. However, these pretrained models still suffer from semantic alignment \cite{tan2024sopo}, human preferences \cite{motioncritic2025}, and motion realism \cite{HanReinDiffuse2025}.

\noindent \textbf{Reinforcement Fine-tuning in Motion Generation.} To address the aforementioned issues, recent studies have explored reinforcement fine-tuning for text-to-motion models, focusing on two key aspects: semantic coherence~\cite{tan2024sopo} and physical realism~\cite{Yuan2023PhysDiff, HanReinDiffuse2025}. Specifically, Tan et al.~\cite{tan2024sopo} construct semi-online preference pairs of semantically aligned and misaligned motions and optimize the model with a DPO-based objective. Wang et al.~\cite{motioncritic2025} curate human preference data and apply PPO to enhance realism. Han et al.~\cite{HanReinDiffuse2025} adopt rule-based rewards. Tan et al.~\cite{tan2024frequency} dynamically generate fake motions to improve motion detection ability. However, these methods largely remain confined to mainstream kinematic-based representations and typically focus on a single aspect, such as semantic coherence~\cite{tan2024sopo} or physical realism~\cite{Yuan2023PhysDiff, HanReinDiffuse2025}. This limitation motivates us to develop a unified reward model that improves human motion across multiple dimensions and accommodates diverse motion representations.

\noindent \textbf{Reinforcement Fine-tuning for Diffusion Models.} Reward-based alignment of diffusion models broadly falls into three categories. Policy-gradient methods, such as DDPO and DPOK \cite{black2023training, NEURIPS2023_fc65fab8}, formulate denoising as an MDP and optimize expected rewards, while DPO-based methods, such as Diffusion-DPO \cite{Wallace_2024_CVPR} and SoPo \cite{tan2024sopo}, optimize a preference objective from reward-induced preference pairs.  However, policy-gradient updates typically rely on exactly computable model likelihoods, an assumption natural for autoregressive models but is not always satisfied by diffusion models \cite{zheng2025diffusionnft}, and DPO introduces an explicit dependence on preference data. These challenges often hinder reward-based fine-tuning from achieving satisfactory results. To seek a simple and effective alternative, we employ differentiable reward-based methods by maximizing reward values \cite{kim2022diffusionclip, clark2024directly, prabhudesai2023aligning, wu2025drtune} to obtain satisfactory results.  Despite its effectiveness, this method may require recursive gradients through denoising and thus suffer from sparse gradients, slow convergence, and high memory costs, discussed in Sec. \ref{sec:motivation}. 

\section{Unified Reward}
\label{sec:unified_reward}
\begin{figure*}[t]
    \centering
    \includegraphics[width=\textwidth]{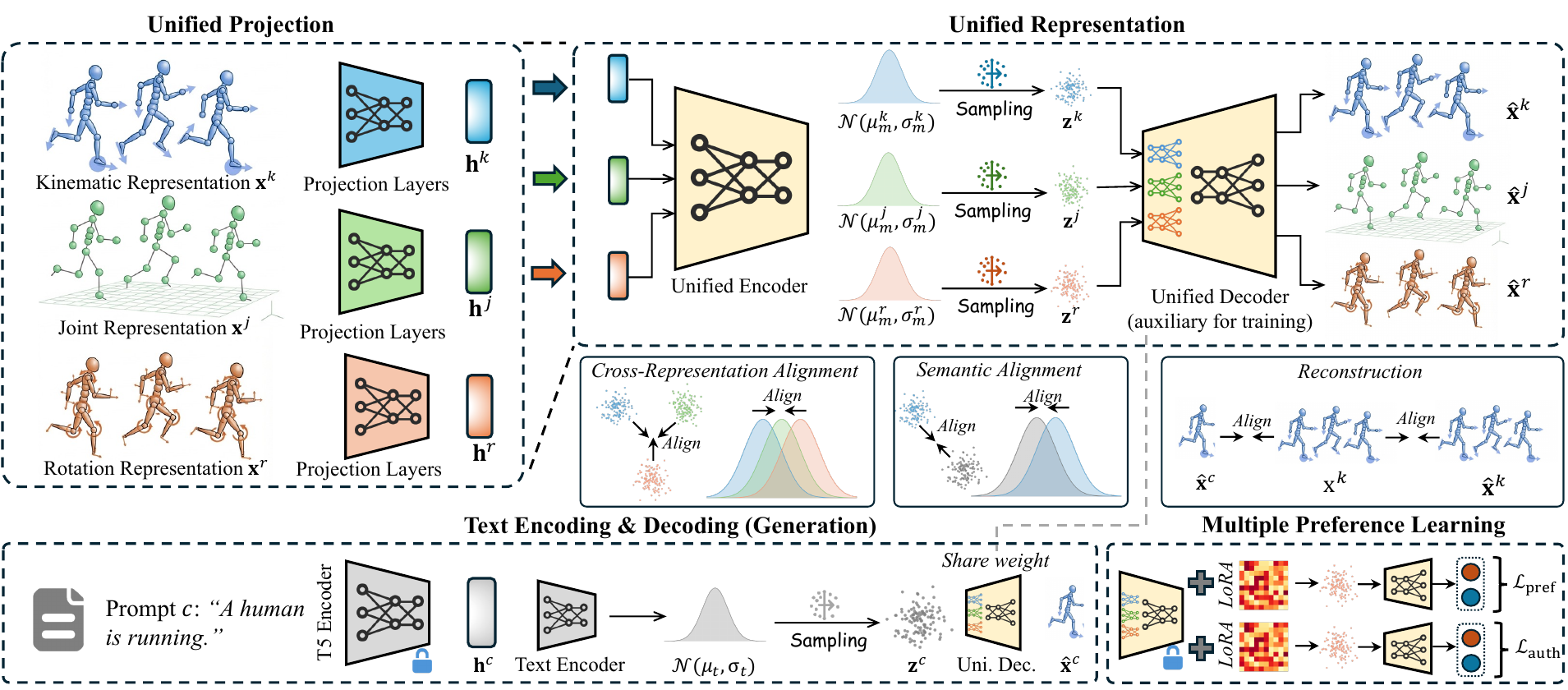}
    \caption{Overview of MotionReward, consisting of unified projection, representation, and multiple preference learning.}
    \label{fig:motionreward_framework}
\end{figure*}

\subsection{Overview}

In this section, we present a unified reward model for multi-dimensional motion evaluation across kinematic, joint, and rotation representations, covering semantic alignment, human preference, and realism. We begin with the problem formulation.

\noindent \textbf{Problem Formulation.} We consider three motion representations $o\in\mathcal{O}=\{k,j,r\}$, where $k$, $j$, and $r$ correspond to kinematic features $\mathbf{x}^{k}$, joint coordinates $\mathbf{x}^{j}$, and rotation parameters $\mathbf{x}^{r}$. Our goal is to learn a unified reward model $\mathcal{R}_\phi$ that evaluates motion quality in terms of semantic alignment between motion and text, human preference, and motion realism. Given a text-motion dataset $\mathbb{D}_{\mathrm{sem}}=\{(\mathbf{x}^{o}, c)\}$, a preference dataset $\mathbb{D}_{\mathrm{pref}}=\{(\mathbf{x}^{\mathrm{w},o}, \mathbf{x}^{\mathrm{l},o})\}$, and a deepfake motions dataset $\mathbb{D}_{\mathrm{df}}=\{\mathbf{x}^{+,o}, \mathbf{x}^{-,o}\}$, we train $\mathcal{R}_\phi$ to satisfy $\mathcal{R}_\phi(\mathbf{x}^{o}, c) > \mathcal{R}_\phi(\mathbf{x'}^{o}, c)$ for any $\mathbf{x'}^{o}\in\mathbb{D}_{\mathrm{sem}}$, $\mathcal{R}_\phi(\mathbf{x}^{\mathrm{w}, o}) > \mathcal{R}_\phi(\mathbf{x}^{\mathrm{l}, o})$, and $\mathcal{R}_\phi(\mathbf{x}^{+,o}) > \mathcal{R}_\phi(\mathbf{x}^{-, o})$.

\noindent \textbf{Data Construction.} Training a unified reward model requires text-motion, preference, and deepfake data across three representations, yet existing datasets \cite{guo2022generating,Zhu2024Human} are designed for a single format. We convert them into all three representations via forward kinematics and generate synthetic motions for deepfake detection. Detailed process are provided in Sec.~\ref{sec:Exp_Setup}.

\noindent \textbf{Overview.} As shown in Fig.~\ref{fig:motionreward_framework}, our model integrates \emph{unified representation learning} and \emph{multiple preference learning}. We first map heterogeneous motions into a shared embedding space with text anchors, which inherently yields a semantic alignment reward. We then transfer this semantic space to task-specific subspaces via lightweight LoRA adapters with a critic head for preference scoring and an MLP head for deepfake classification, achieving performance comparable to or better than task-specific reward models.

\subsection{Unified Representation Learning} 
MotionReward learns a unified semantic space by projecting heterogeneous motions into a shared feature space with lightweight adapters and aligning them with text embeddings.

\noindent \textbf{Architecture.} The model consists of representation-specific projection layers $\{\phi^{o}\}_{o\in\mathcal{O}}$, a shared motion encoder $\mathcal{E}_\mathrm{M}$, a text encoder $\mathcal{E}_\mathrm{T}$, and decoders $\{\mathcal{D}^{o}\}_{o\in\mathcal{O}}$, where each decoder is a shared module followed by a representation-specific linear layer. For three representations $o\in\mathcal{O}=\{k,j,r\}$, corresponding to kinematic features $\mathbf{x}^{k}\!\in\!\mathbb{R}^{T\times 263}$, joint coordinates $\mathbf{x}^{j}\!\in\!\mathbb{R}^{T\times 22\times 3}$, and rotation parameters $\mathbf{x}^{r}\!\in\!\mathbb{R}^{T\times 135}$, each $\mathbf{x}^o$ is projected into a shared feature space:
\begin{equation}\small
\mathbf{h}^o=\phi^{o}(\mathbf{x}^o)\in\mathbb{R}^{T\times d},\quad \forall o\in\mathcal{O},
\label{eq:proj_unified}
\end{equation}
where $d{=}256$. We adopt a VAE backbone where $\mathcal{E}_\mathrm{M}$ and $\mathcal{E}_\mathrm{T}$ parameterize diagonal Gaussian posteriors $q_m^o{=}\mathcal{N}(\mu_m^o,\sigma_m^{o,2})$ and $q_c{=}\mathcal{N}(\mu_c,\sigma_c^2)$:
\begin{equation}\small
\mu_m^o,\sigma_m^o=\mathcal{E}_\mathrm{M}(\mathbf{h}^o),\quad \mu_c,\sigma_c=\mathcal{E}_\mathrm{T}(c),\quad \forall o\in\mathcal{O}.
\label{eq:vae_params}
\end{equation}
Latent embeddings $\mathbf{z}_m^o\!\sim\! q_m^o$ and $\mathbf{z}_c\!\sim\! q_c$ are sampled via reparameterization, and each $\mathbf{z}_m^o$ is decoded as $\hat{\mathbf{x}}^o{=}\mathcal{D}^{o}(\mathbf{z}_m^o)$ for reconstruction. We next describe the optimization objectives.

\noindent \textbf{Text-Motion Semantic Alignment.} Motivated by ReAlign~\cite{weng2025}, we align text and motion embeddings through four complementary objectives to construct semantics space.

\noindent \textit{\textbf{(i)} Semantic reconstruction loss}~\cite{weng2025}. To ensure semantic fidelity, we reconstruct motions from both motion and text embeddings using the decoder $\mathcal{D}^{o}$, yielding $\hat{\mathbf{x}}_m^o=\mathcal{D}^{o}(\mathbf{z}_m^o)$ and $\hat{\mathbf{x}}_c^o=\mathcal{D}^{o}(\mathbf{z}_c)$. The reconstruction loss is
\begin{equation}
\small
\mathcal{L}_{\mathrm{rec}}
=\ell_{1}(\hat{\mathbf{x}}_c^o,\mathbf{x}^o)+\ell_{1}(\hat{\mathbf{x}}_m^o,\mathbf{x}^o)+\ell_{1}(\hat{\mathbf{x}}_m^o,\hat{\mathbf{x}}_c^o),
\label{eq:rec}
\end{equation}
where $\ell_{1}$ is the smooth Huber loss~\cite{petrovich2022temos}. The three terms encourage text-decoded reconstruction, motion reconstruction fidelity, and cross-modal alignment, respectively.

\noindent \textit{\textbf{(ii)} Distribution regularization loss}~\cite{weng2025}: To regularize the latent space, we constrain the posteriors to be mutually consistent and close to the standard Gaussian prior $p_0=\mathcal{N}(\mathbf{0},\mathbf{I})$:
\begin{equation*}
\small
\begin{split}
\mathcal{L}_{\mathrm{kl}}
&=D_{\mathrm{KL}}(q_c\|q_m^o)+D_{\mathrm{KL}}(q_m^o\|q_c) +D_{\mathrm{KL}}(q_m^o\|p_0)+D_{\mathrm{KL}}(q_c\|p_0).
\end{split}
\label{eq:kl}
\end{equation*}
The first two terms align motion and text distributions, while the last two terms prevent posterior collapse.

\noindent \textit{\textbf{(iii)} Semantic consistency loss}~\cite{weng2025}: To align motion and text embeddings in the latent, we minimize their $\ell_1$ distance:
\begin{equation}
\small
\mathcal{L}_{\mathrm{lat}}
=\ell_{1}(\mathbf{z}_m^o,\mathbf{z}_c),
\label{eq:lat}
\end{equation}
which encourages the  semantic alignment.

\noindent \textit{\textbf{(iv)} InfoNCE loss}~\cite{weng2025}: To enhance discriminative alignment, we adopt contrastive learning over a batch of $B$ motion-text pairs $\{(\mathbf{x}_i^o,c_i)\}_{i=1}^B$, as follow:
\begin{equation}
\small
\mathcal{L}_{\mathrm{cl}}=\mathcal{L}_{\mathrm{info}}(\mathbf{z}_c, \mathbf{z}_m^o),
\end{equation}
where the InfoNCE loss $\mathcal{L}_{\mathrm{info}}(\mathbf{a},\mathbf{b})$ is defined as:
\begin{equation}
\small
\mathcal{L}_{\mathrm{info}}(\mathbf{a},\mathbf{b})
=-\mathbb{E}\Biggl[\log\frac{\exp(s_{ii}/\tau)}{\sum_{j}\exp(s_{ij}/\tau)}+\log\frac{\exp(\bar{s}_{ii}/\tau)}{\sum_{j}\exp(\bar{s}_{ij}/\tau)}\Biggr],
\label{eq:infonce}
\end{equation}
with $s_{ij}=\cos(\mathbf{a}_i,\mathbf{b}_j)$, $\bar{s}_{ij}=\cos(\mathbf{b}_i,\mathbf{a}_j)$, and $\tau$ as a temperature parameter. This InfoNCE loss encourages the latent semantics alignment by contrastive learning.

\noindent \textbf{Cross-Representation Alignment.} Beyond text-motion alignment, cross-representation alignment is also essential for motion representations. To this end, we further align embeddings across different motion formats. Given the same motion in different representation $o_1, o_2$, we encourage their latent embeddings $\mathbf{z}_m^{o_1}$ and $\mathbf{z}_m^{o_2}$ to be consistent, as follows:
\begin{equation}
\small
\begin{split}
\mathcal{L}_{\mathrm{CRA}}
=\mathbb{E}_{o_1,o_2}\Bigl[
\alpha_1\ell_{1}(\mathbf{z}_m^{o_1},\mathbf{z}_m^{o_2})
&+\alpha_2 D_{\mathrm{JS}}(q_m^{o_1}\|q_m^{o_2})\\
&\quad \quad  +\alpha_3\mathcal{L}_{\mathrm{info}}(\mathbf{z}_m^{o_1},\mathbf{z}_m^{o_2})\Bigr],
\end{split}
\label{eq:CRA}
\end{equation}
where $D_{\mathrm{JS}}(\cdot\|\cdot)$ denotes the JS divergence~\cite{petrovich2022temos} and $\mathcal{L}_{\mathrm{info}}$ is defined in Eq.~\eqref{eq:infonce}. Here $\alpha_i$ are weighting coefficients, set as $\alpha_1=0.1$, $\alpha_2=10^{-5}$, and $\alpha_3=0.1$.

\noindent \textbf{Training Objective.} Overall, the final loss is defined as
\begin{equation}
\small
\mathcal{L}_{\mathrm{sem}}
=\mathcal{L}_{\mathrm{rec}}
+\lambda_1\mathcal{L}_{\mathrm{kl}}
+\lambda_2\mathcal{L}_{\mathrm{lat}}
+\lambda_3\mathcal{L}_{\mathrm{cl}}
+\lambda_4\mathcal{L}_{\mathrm{CRA}},
\label{eq:s1}
\end{equation}
where $\lambda_i$ are weighting coefficients, set as $\lambda_1\!=\!10^{-5}$, $\lambda_2\!=\!10^{-5}$, $\lambda_3\!=\!10^{-1}$, and $\lambda_4\!=\!10^{-1}$. 

\begin{figure*}[t]
    \centering
    \includegraphics[width=\textwidth]{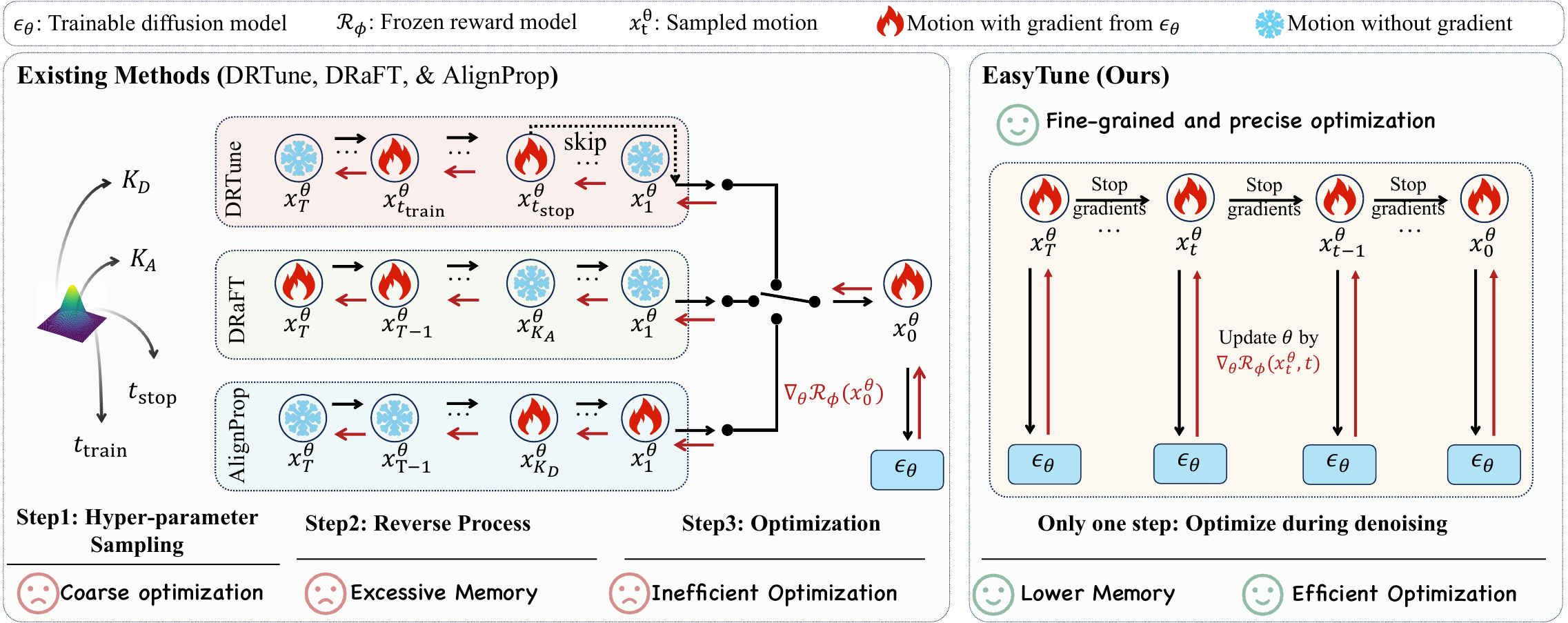}
    
    \caption{{The framework of existing differentiable reward-based methods (left) and our proposed EasyTune (right).} Existing methods backpropagate the gradients of the reward model through the overall denoising process, resulting in (1) excessive memory, (2) inefficient, and (3) coarse-grained optimization. In contrast, EasyTune optimizes the diffusion model by directly backpropagating the gradients at each denoising step, overcoming these issues.}
    
    \label{fig:framework}
\end{figure*}

\subsection{Multiple Preference Learning}

Building upon the unified semantic latent, we aim to learn multiple rewards, including human preference and motion authenticity. To leverage such rich semantic space, we  attach lightweight task-specific adapters to the frozen backbone, enabling modular learning of multiple rewards.

\noindent \textbf{Lightweight Task-Specific Adapters.} We freeze the backbone parameters $\theta$ and attach two task-specific LoRA adapters, $\Delta\theta_{\psi}$ and $\Delta\theta_{\omega}$, for preference and authenticity modeling, respectively. We further add a critic head $h_{\psi}$ for preference scoring and a classifier head $h_{\omega}$ for deepfake detection.

\noindent \textbf{Preference Reward Loss.} To model human preference, we train MotionReward on preference pairs $(\mathbf{x}^{\mathrm{w},o},\mathbf{x}^{\mathrm{l},o})\in\mathbb{D}_{\mathrm{pref}}$, where $\mathbf{x}^{\mathrm{w},o}$ is preferred over $\mathbf{x}^{\mathrm{l},o}$.  Both motions are encoded by the adapted encoder $\mathcal{E}_\mathrm{M}^{\psi}=\mathcal{E}_\mathrm{M}+\Delta\theta_{\psi}$ and scored by the critic head $h_{\psi}$, optimized with a ranking loss:
\begin{equation}
\small
\mathcal{L}_{\mathrm{pref}}
=-\mathbb{E}_{(\mathbf{x}^{\mathrm{w},o},\mathbf{x}^{\mathrm{l},o})}\log\sigma\bigl(h_{\psi}(\mathbf{z}^{\mathrm{w}})-h_{\psi}(\mathbf{z}^{\mathrm{l}})\bigr),
\label{eq:pref_loss}
\end{equation}
where $\mathbf{z}^{\mathrm{w}}\!=\!\mathcal{E}_\mathrm{M}^{\psi}(\mathbf{x}^{\mathrm{w},o})$, $\mathbf{z}^{\mathrm{l}}\!=\!\mathcal{E}_\mathrm{M}^{\psi}(\mathbf{x}^{\mathrm{l},o})$, and $\sigma(\cdot)$ is the sigmoid function. This objective encourages the model to assign higher scores to preferred motions.

\noindent \textbf{Authenticity Reward Loss.} To assess motion authenticity, we train a binary classifier on labeled samples $(\mathbf{x}^{o},y)\in\mathbb{D}_{\mathrm{df}}$, where $y=1$ for real motions $\mathbf{x}^{+,o}$ and $y=0$ for fake motions $\mathbf{x}^{-,o}$. Motions are encoded by $\mathcal{E}_\mathrm{M}^{\omega}=\mathcal{E}_\mathrm{M}+\Delta\theta_{\omega}$ and classified by head $h_{\omega}$:
\begin{equation*}
\small
\mathcal{L}_{\mathrm{auth}}
=-\mathbb{E}\bigl[y\log h_{\omega}(\mathbf{z}_m^{+,\omega})+(1-y)\log(1-h_{\omega}(\mathbf{z}_m^{-,\omega}))\bigr].
\end{equation*}
where $\mathbf{z}_m^{+,\omega}\!=\!\mathcal{E}_\mathrm{M}^{\omega}(\mathbf{x}^{+,o})$ and $\mathbf{z}_m^{-,\omega}\!=\!\mathcal{E}_\mathrm{M}^{\omega}(\mathbf{x}^{-,o})$. This encourages the model to distinguish real motions from generated ones.

For both tasks, we freeze the backbone $\theta$ and update only the task-specific adapters and heads, namely $\{\Delta\theta_{\psi}, h_{\psi}\}$ for preference and $\{\Delta\theta_{\omega}, h_{\omega}\}$ for authenticity. At inference, we activate the corresponding adapter-head pair to produce the desired reward without affecting other tasks. Benefiting from this shared space, these lightweight LoRA adapters enable cross-task transfer and achieve performance comparable to, or even better than, task-specific reward models.

\subsection{Self-refinement for Semantic Enhancement}
\label{sec:preference_learning}

The semantic reward is derived from text-motion alignment in the unified representation space. However, due to the scarcity of text-motion paired data, contrastive learning can only establish coarse semantic correspondence, limiting its ability to distinguish fine-grained semantic differences. To this end, we propose {Self-refinement Preference Learning (SPL)}, which constructs preference pairs from existing data via hard negative mining. We detail the SPL as follows.

\noindent \textbf{Reward Model.} Given a motion \( \mathbf{x} \) and a text description \( c \), the semantic reward is computed as:
\begin{equation}
    \begin{aligned} \small
         \mathcal{R}_\phi (\mathbf{x}, c) = \cos\bigl(\mathcal{E}_\mathrm{M}(\mathbf{x}),\; \mathcal{E}_\mathrm{T}(c)\bigr),
    \end{aligned}
    \label{eq:relevance_score_v2}
\end{equation}
where \( \mathcal{E}_\mathrm{M} \) and \( \mathcal{E}_\mathrm{T} \) are the motion and text encoders from the unified representation.

\noindent \textbf{Hard Negative Mining.} To construct preference pairs for semantic refinement, we retrieve the top-\( k \) motions from the training set based on reward scores, denoted as the retrieval set \( \mathbb{D}_\mathrm{R} \). If the ground-truth motion \( \mathbf{x}^\mathrm{gt} \) is not among the top-\( k \), the highest-scoring incorrect motion serves as a hard negative:
\begin{equation}\small
\begin{aligned}
\mathbf{x}^\mathrm{w} &= \mathbf{x}^\mathrm{gt}, \quad
\mathbf{x}^\mathrm{l} = 
\begin{cases}
\arg \max_{\mathbf{x} \in \mathbb{D}_\mathrm{R}} \mathcal{R}_\phi(\mathbf{x}, c), & \text{if } \mathbf{x}^\mathrm{gt} \notin \mathbb{D}_\mathrm{R}, \\
\mathbf{x}^\mathrm{gt}, & \text{otherwise}.
\end{cases}
\end{aligned}
\label{eq:hard_neg}
\end{equation}
\noindent \textbf{Preference-based Refinement.} Given a preference pair $(\mathbf{x}^\mathrm{w}, \mathbf{x}^\mathrm{l})$, we compute probabilities from their reward scores:
\begin{equation}\small
\mathcal{P} = \mathrm{Softmax}\bigl(\mathcal{R}_\phi(\mathbf{x}^\mathrm{w}, c),\; \mathcal{R}_\phi(\mathbf{x}^\mathrm{l}, c)\bigr),
\label{eq:reward_distribution}
\end{equation}
and align $\mathcal{P}$ with a target distribution $\mathcal{Q}$, where $\mathcal{Q}{=}(1.0, 0.0)$ if $\mathbf{x}^\mathrm{w} \neq \mathbf{x}^\mathrm{l}$ and $\mathcal{Q}{=}(0.5, 0.5)$ otherwise, by minimizing:
\begin{equation}\small
    \mathcal{L}_\mathrm{SPL} (\phi) = D_{\mathrm{KL}}(\mathcal{Q} \parallel \mathcal{P}).
    \label{eq:spl_loss_v2}
\end{equation}
This refines the semantic reward by encouraging higher scores for matched motions and lower scores for hard negatives, without additional human annotations.

\section{EasyTune}
\label{sec:easytune_method}

\begin{figure*}[t]
    \centering
    \begin{minipage}[t]{0.30\textwidth}
        \centering
        \adjustbox{height=3.6cm, max width=\linewidth}{%
            \includegraphics{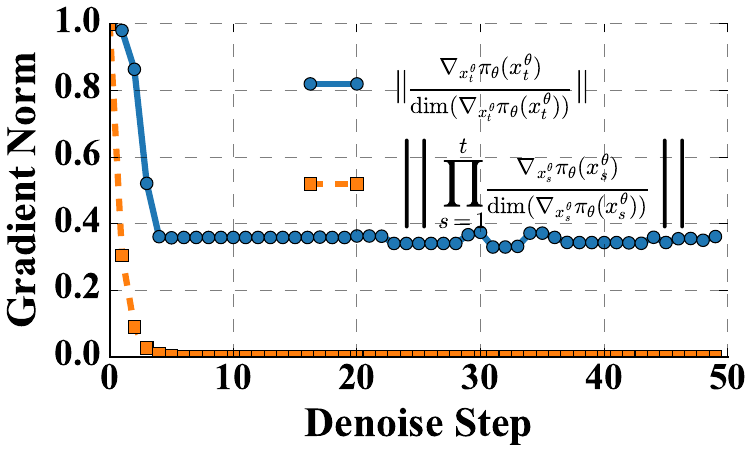}}
        \caption{{Gradient norm with respect to denoising steps.} Here, $\text{dim}(\cdot)$ denotes the gradient dimension.}
        \label{fig:norm}
    \end{minipage}
    \hfill
    \begin{minipage}[t]{0.30\textwidth}
        \centering
        \adjustbox{height=3.6cm, max width=\linewidth}{%
            \includegraphics{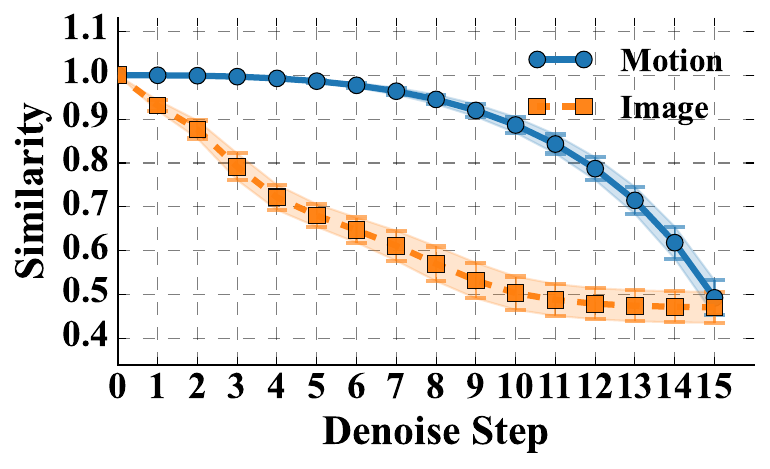}}
        \caption{Similarity between $t$-th step noised and clean motion. }
        \label{fig:motion_image_cmp}
    \end{minipage}
    \hfill
    \begin{minipage}[t]{0.30\textwidth}
        \centering
        \adjustbox{height=3.6cm, max width=\linewidth}{%
            \includegraphics{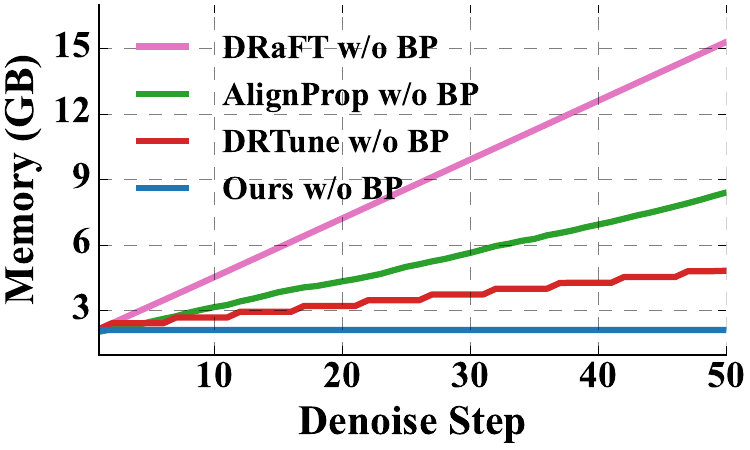}}
        \caption{{Memory usage comparison}. Here, ``w/o BP'' indicates memory measured without backpropagation.}
        \label{fig:memory}
    \end{minipage}
\end{figure*}

\subsection{Motivation: Rethinking Differentiable Reward}
\label{sec:motivation}

\noindent\textbf{Preliminaries.} Existing methods fine-tune a pre-trained motion diffusion model $\epsilon_\theta$ by maximizing the reward value $\mathcal{R}_\phi(\mathbf{x}_0^\theta, c)$ of the motion $\mathbf{x}_0^\theta$ generated via a $T$-step reverse process. As illustrated in Fig.~\ref{fig:framework}, this requires retaining gradients throughout the entire denoising trajectory $\{\mathbf{x}_T^\theta, \ldots, \mathbf{x}_0^\theta\}$, so that the model parameters can be updated by $\nabla_\theta \mathcal{R}_\phi(\mathbf{x}_0^\theta, c)$. Formally, the optimization objective is to fine-tune $\theta$ to maximize the expected reward, with the loss defined as:
\begin{equation} \small
\mathcal{L}(\theta) = -\mathbb{E}_{c \sim \mathbb{D}_\mathrm{T}, \mathbf{x}_0^\theta \sim \pi_\theta(\cdot|c)} \left[ \mathcal{R}_\phi(\mathbf{x}_0^\theta, c) \right],
\label{eq:loss}
\end{equation}
where $c$ is a text condition from the training set $\mathbb{D}_\mathrm{T}$, and $\mathbf{x}_0^\theta$ is the motion generated from noise $\mathbf{x}_T \sim \mathcal{N}(\mathbf{0}, \mathbf{I})$ via a $T$-step reverse process $\pi_\theta$. The $t$-th step of the reverse process is denoted as:
\begin{equation}\small
\mathbf{x}_{t-1}^\theta = \pi_\theta(\mathbf{x}_t^\theta, t, c) := \frac{1}{\sqrt{\alpha_t}} \left( \mathbf{x}_t^\theta - \frac{\beta_t}{\sqrt{1 - \bar{\alpha}_t}} \epsilon_\theta(\mathbf{x}_t^\theta, t, c) \right),
\label{eq:reverse_process}
\end{equation}
where $\mathbf{x}_{t-1}^\theta$ is the denoised motion at step $t-1$, and $\alpha_t$, $\beta_t$ are noise schedule parameters.

\noindent \textbf{Gradient Analysis.}  To optimize the loss in Eq.~\eqref{eq:loss}, we further analyze the gradient computation, where the gradient of $\mathcal{L}(\theta)$ w.r.t. the model parameters $\theta$ is computed via the chain rule:
\begin{equation} \small
\frac{\partial \mathcal{L}(\theta)}{\partial \theta} = -\mathbb{E}_{c \sim \mathbb{D}_\mathrm{T}, \mathbf{x}_0^\theta \sim \pi_\theta(\cdot|c)} \left[ \frac{\partial \mathcal{R}_\phi(\mathbf{x}_0^\theta, c)}{\partial \mathbf{x}_0^\theta} \cdot \frac{\partial \mathbf{x}_0^\theta}{\partial \theta} \right].
\label{eq:gradient}
\end{equation}
Here, $\tfrac{\partial \mathcal{R}_\phi(\mathbf{x}^\theta_0, c)}{\partial \mathbf{x}^\theta_0}$ represents the gradient of the reward model w.r.t. the generated motion, and $\tfrac{\partial \mathbf{x}^\theta_0}{\partial \theta}$ captures the dependence of the generated motion $\mathbf{x}^\theta_t$ on the model $\theta$ through the reverse trajectory. 

Eq.~\eqref{eq:gradient} indicates that the gradient of loss function can be divided into two terms: ${\partial \mathcal{R}_\phi(\mathbf{x}^\theta_0, c)}/{\partial \mathbf{x}^\theta_0}$, which can be directly computed from the reward model, and ${\partial \mathbf{x}^\theta_0}/{\partial \theta}$, which depends on the denoising trajectory $\pi_\theta$. Here, we introduce Corollary \ref{thm:t1} to analyze this gradient (See the proof in the supplementary material).
\begin{corollary}\label{thm:t1}
    Given the reverse process in Eq.~\eqref{eq:reverse_process}, $\mathbf{x}_{t-1}^\theta = \pi_\theta(\mathbf{x}_t^\theta, t, c)$, the gradient w.r.t diffusion model $\theta$, denoted as $\tfrac{\partial \mathbf{x}^\theta_{t-1}}{\partial \theta}$, can be expressed as:
    \begin{equation}\small
            {\frac{\partial \mathbf{x}^\theta_{t-1}}{\partial \theta}} =  \underbrace{\frac{\partial \pi_{{\theta}}(\mathbf{x}^{\theta}_t, t, c)}{\partial {\theta}}}_{\textnormal{direct term}} + 
            \underbrace{\frac{\partial \pi_\theta(\mathbf{x}^{\theta}_t, t, c)}{\partial \mathbf{x}^{\theta}_t} \cdot {\frac{\partial \mathbf{x}^{\theta}_t}{\partial \theta}}}_{\textnormal{indirect term}}.
        \label{eq:recursive_gradient}
    \end{equation}
\end{corollary}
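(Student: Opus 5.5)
The identity follows from the multivariable chain rule, once we note that $\theta$ enters $\mathbf{x}_{t-1}^\theta$ through two channels. The reverse step in Eq.~\eqref{eq:reverse_process} defines a map $\pi:(\theta,\mathbf{x},t,c)\mapsto \frac{1}{\sqrt{\alpha_t}}\bigl(\mathbf{x}-\frac{\beta_t}{\sqrt{1-\bar\alpha_t}}\epsilon_\theta(\mathbf{x},t,c)\bigr)$. The first channel is the explicit parameter dependence of $\epsilon_\theta$. The second is the state $\mathbf{x}=\mathbf{x}_t^\theta$, which was itself produced by the previous steps $\pi_\theta(\cdot,t+1,c),\dots,\pi_\theta(\cdot,T,c)$ starting from $\mathbf{x}_T$. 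The noise $\mathbf{x}_T$ does not depend on $\theta$, and neither do $t$ or $c$.

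The plan is therefore to write $\mathbf{x}_{t-1}^\theta = F(\theta, g(\theta))$, where $F(\theta,\mathbf{x}) := \pi_\theta(\mathbf{x},t,c)$ and $g(\theta):=\mathbf{x}_t^\theta$. We then differentiate the composite $\theta\mapsto F(\theta,g(\theta))$. First, we assume $\epsilon_\theta$ is differentiable jointly in its parameters and input (standard for neural networks with smooth activations, or almost everywhere otherwise). Then $F$ is differentiable because it is affine in $\mathbf{x}$ plus a smooth function of $\epsilon_\theta$. By induction downward from $t=T$, where $\partial\mathbf{x}_T/\partial\theta=0$, every $g$ is differentiable as a finite composition of differentiable maps. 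The total-derivative chain rule then gives
\[
\frac{d}{d\theta}F(\theta,g(\theta)) = \partial_\theta F(\theta,g(\theta)) + \partial_{\mathbf{x}}F(\theta,g(\theta))\,\frac{\partial g}{\partial\theta}.
\]
Evaluating at $g(\theta)=\mathbf{x}_t^\theta$ gives exactly the direct term and the indirect term of Eq.~\eqref{eq:recursive_gradient}.

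As a sanity check we can make both factors explicit. The direct term is $\partial_\theta\pi_\theta = -\frac{\beta_t}{\sqrt{\alpha_t}\sqrt{1-\bar\alpha_t}}\,\partial_\theta\epsilon_\theta(\mathbf{x}_t^\theta,t,c)$, with the state held fixed. The state Jacobian is $\partial_{\mathbf{x}}\pi_\theta = \frac{1}{\sqrt{\alpha_t}}\bigl(I-\frac{\beta_t}{\sqrt{1-\bar\alpha_t}}\partial_{\mathbf{x}}\epsilon_\theta\bigr)$. Unrolling the recursion down to $t=T$ also shows that the indirect term expands into a sum, over later steps $s>t$, of products of state Jacobians times direct terms. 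This unrolled form is the recursive dependence the paper exploits, but proving the corollary itself needs only the one-step identity.

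The only step needing care is notational rather than mathematical. The symbol $\partial\pi_\theta/\partial\theta$ must be read as a partial derivative with $\mathbf{x}_t^\theta$ frozen, while $\partial\mathbf{x}_{t-1}^\theta/\partial\theta$ on the left-hand side is a total derivative. I would state this convention explicitly, together with the differentiability assumption on $\epsilon_\theta$, so the two terms are unambiguous and the decomposition is not double-counted.
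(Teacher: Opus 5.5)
Your proposal is correct and takes essentially the same route as the paper. The paper sets $u(\theta)=\mathbf{x}_t^\theta$, $v(\theta)=\theta$, applies the multivariate chain rule to $\pi_{v(\theta)}(u(\theta),t,c)$ and uses $\partial v/\partial\theta = I$; your explicit separation of the partial derivative $\partial\pi_\theta/\partial\theta$ (state frozen) from the total derivative on the left-hand side, together with the differentiability assumption and the base case $\partial\mathbf{x}_T/\partial\theta=0$, makes the argument cleaner than the paper's final line, which blurs these two meanings.
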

Corollary \ref{thm:t1} shows that the computation involves two parts: (1) a \emph{direct term} from the dependence of the diffusion model $\pi_{{\theta}}$ on ${\theta}$, and (2) an \emph{indirect term} that depends on the $t$-th step generated motion $\mathbf{x}^{{\theta}}_t$. However, the reverse process in diffusion models is inherently recursive, where the denoised motion $\mathbf{x}_{t-1}$ is relied on $\mathbf{x}_t$, which in turn depends on $\mathbf{x}_{t+1}$, resulting in substantial computational complexity for $T$ time steps intermediate variables.

To compute the full gradient ${\partial \mathcal{L}(\theta)}/{\partial \theta}$, we unroll the ${\partial \mathbf{x}_0^\theta}/{\partial \theta}$ using Corollary \ref{thm:t1} and substitute it into Eq.~\eqref{eq:gradient} resulting in (see proof in the supplementary material):
\begin{equation}\small
    \begin{split}
    \frac{\partial \mathcal{L}(\theta)}{\partial \theta} = &-\mathbb{E}_{c\sim \mathbb{D}_\mathrm{T}, \mathbf{x}^\theta_0 \sim \pi_\theta(\cdot|c)}  \Bigg[ \frac{\partial \mathcal{R}_\phi(\mathbf{x}^\theta_0)}{\partial \mathbf{x}^\theta_0}  \\ & \cdot \sum_{t=1}^T  \underbrace{\left( \prod_{s=1}^{t-1} \frac{\partial \pi_\theta(\mathbf{x}^\theta_s, s, c)}{\partial \mathbf{x}^\theta_s} \right)}_{\text{tend to 0 when t is larger}} 
     \underbrace{\left(\frac{\partial \pi_\theta(\mathbf{x}^\theta_t, t, c)}{\partial \theta}\right)}_{\text{optimizing t-th step}} \Bigg].
    \end{split}
    \label{eq:final_gradient}
\end{equation}

\noindent \textbf{Limitations.} Eq.~\eqref{eq:final_gradient} reveals the core optimization mechanism of existing methods: the motions $\mathbf{x}^\theta_0$ are generated via the reverse process $\pi_\theta$, with the full computation graph preserved to enable the maximization of the reward $\mathcal{R}_\phi(\mathbf{x}^\theta_0, c)$. However, as shown in Fig. \ref{fig:framework}, this optimization incurs severe limitations: 

\begin{enumerate}
    \item \emph{Memory-intensive and sparse optimization}: Gradient computation over $T$ reverse steps demands storing the entire trajectory ${\mathbf{x}^\theta_t}_{t=1}^T$ and corresponding Jacobians, leading to high memory consumption and inefficient, sparse optimization compared to the sampling process.
    \item \emph{Vanishing gradient due to coarse-grained optimization}: Eq.~\eqref{eq:final_gradient} indicates that the optimization of $t$-th noisy step relies on the gradient $\frac{\partial \pi_\theta(\mathbf{x}^\theta_t, t, c)}{\partial \theta}$ with a coefficient $\prod_{s=1}^{t-1} \frac{\partial \pi_\theta(\mathbf{x}^\theta_s, s, c)}{\partial \mathbf{x}^\theta_s}$. However, during optimization, the term $\frac{\partial \pi_\theta(\mathbf{x}^\theta_t, t, c)}{\partial \mathbf{x}^\theta_t}$ tends to converge to 0 (see the blue line in Fig.~\ref{fig:norm}), causing the coefficient $\prod_{s=1}^{t-1} \frac{\partial \pi_\theta(\mathbf{x}^\theta_s, s, c)}{\partial \mathbf{x}^\theta_s}$ to also approach 0 (see the orange line in Fig.~\ref{fig:norm}). Consequently, the optimization process tends to neglect the contribution of $\frac{\partial \pi_\theta(\mathbf{x}^\theta_t, t, c)}{\partial \theta}$. More importantly, the ignored optimization at these early noise steps may be more crucial than at later steps \cite{xie2025dymo, zhou2025golden,tan2026consistentrftreducingvisualhallucinations}.
\end{enumerate}

\noindent \textbf{Motivation.} To address the aforementioned limitations, we argue that the key issue lies in Corollary \ref{thm:t1}: {the computation of ${\partial \mathbf{x}_t^\theta}/{\partial \theta}$ recursively depends on ${\partial \mathbf{x}_{t+1}^\theta}/{\partial \theta}$, making the computation of ${\partial \mathbf{x}_0^\theta}/{\partial \theta}$ reliant on the entire T-step reverse process}. This dependency necessitates storing a large computation graph, resulting in substantial memory consumption and delayed optimization. To overcome this, an intuitive insight is introduced: \emph{{optimizing the gradient step-by-step during the reverse process.}} As illustrated in Fig.~\ref{fig:framework}, step-by-step optimization offers several advantages: \emph{(1) Lower memory consumption and dense optimization}: each update only requires the computation graph of the current step, allowing gradients to be computed and applied immediately instead of waiting until the end of the $T$-step reverse process. \emph{(2) Fine-grained optimization}: each step is optimized independently, so that the update of the $t$-th step does not depend on the vanishing product of coefficients $\prod_{s=1}^{t-1} \frac{\partial \pi_\theta(\mathbf{x}^\theta_s, s, c)}{\partial \mathbf{x}^\theta_s}$. 

\begin{figure*}[t!]
    \centering
    \includegraphics[width=\textwidth]{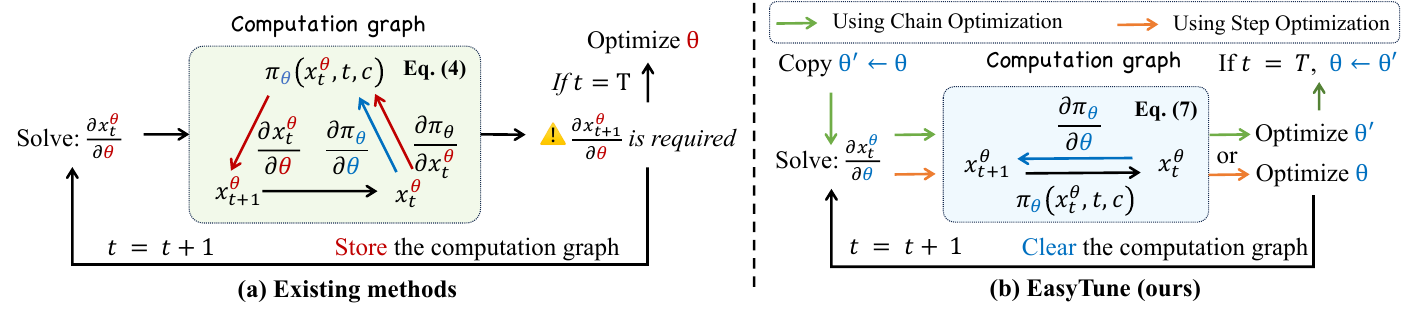}
    
    \caption{{Core insight of EasyTune.} By replacing the recursive gradient in Eq.\eqref{eq:recursive_gradient} with step-level ones in Eq.\eqref{eq:ourgradient}, \emph{EasyTune} removes recursive dependencies, enabling (1) step-wise graph storage, (2) efficiency, and (3) fine-grained optimization.}
    
    \label{fig:insight}
\end{figure*}
\subsection{Efficient Fine-Tuning for Motion Diffusion}
\label{sec:EasyTune}

Inspired by the above discussion, we propose EasyTune, a step-aware differentiable reward-based fine-tuning framework for diffusion models. Step-wise optimization requires a noise-aware reward model that can evaluate noised motion at each step. Unlike image generation, where rewards are predominantly output-level~\cite{xu2023imagereward} due to the complex semantics of noised states, motion representations exhibit simpler and more interpretable semantics, making step-aware reward viable (See Fig.~\ref{fig:motion_image_cmp}). This motivates us to leverage the unified step-aware reward model via noise-aware backbone from ReAlign~\cite{weng2025}.

\noindent\textbf{Noise-Aware Reward.} Step-wise optimization requires a reward model capable of evaluating noised motions. We consider two approaches: (1) following ReAlign~\cite{weng2025}, we train the reward model on noised motions with varying noise levels, enabling direct evaluation of $\mathcal{R}_\phi(\mathbf{x}_t, t, c)$ at any timestep $t$; (2) for ODE-based models~\cite{lu2022dpm, motionlcm-v2}, deterministic sampling yields a coarse clean motion $\hat{\mathbf{x}}_0 = \pi'_\theta(\mathbf{x}_t, t, c)$ via one-step prediction, whose reward serves as a proxy. The noise-aware reward is defined as:
\begin{equation}\small
\begin{aligned}
\mathcal{R}_\phi (\mathbf{x}_t, t, c)  &= 
\begin{cases}
\mathcal{R}_\phi (\hat{\mathbf{x}}_0, 0, c) , & \text{ODE-based prediction}, \\
\mathcal{R}_\phi ({\mathbf{x}}_t, t, c) , & \text{SDE/ODE-based perception}.
\end{cases}
\end{aligned}
\label{eq:reward}
\end{equation}
We adopt the unified noise-aware perception as default settings, along with the ODE-based proxy in Tab.~\ref{tab:PlugAndPlay_H3D}.

\noindent\textbf{Unified Reward Aggregation.} At each timestep, the unified reward aggregates four terms. Given a noised motion $\mathbf{x}_t$ at step $t$ and text condition $c$, the total reward is a weighted sum of semantic alignment $\mathcal{R}^{s}$, human preference $\mathcal{R}^{p}$, authenticity $\mathcal{R}^{a}$, and motion-to-motion similarity $\mathcal{R}^{m}_\phi$:
\begin{equation}\small
\begin{split}
\mathcal{R}_\phi(\mathbf{x}_t, t, c) = \,& w_1\,\mathcal{R}^{s}_\phi(\mathbf{x}_t, t, c) + w_2\,\mathcal{R}^{p}_\phi(\mathbf{x}_t, t) \\
&+ w_3\,\mathcal{R}^{a}_\phi(\mathbf{x}_t, t) + w_4\,\mathcal{R}^{m}_\phi(\mathbf{x}_t, t),
\end{split}
\label{eq:unified_reward}
\end{equation}
where $\mathcal{R}^{s}_\phi$, $\mathcal{R}^{p}_\phi$, and $\mathcal{R}^{a}_\phi$ are the semantic, preference, and authenticity reward heads from MotionReward in Sec.~\ref{sec:unified_reward}. The similarity reward $\mathcal{R}^{m}_\phi$ measures the closeness between the fine-tuned motion and the output of the frozen pre-trained model, serving as regularization against reward hacking. We set $w_1{=}1$, $w_2{=}0.002$, $w_3{=}0.002$, and $w_4{=}1$ throughout.

\noindent \textbf{EasyTune.} Based on the noise-aware reward, we introduce EasyTune for fine-tuning motion diffusion models. As discussed in Sec.~\ref{sec:motivation}, limitations of existing methods~\cite{clark2024directly, wu2025drtune} stem from the recursive gradients computation. The key idea of EasyTune is to maximize the reward value at each step, allowing the parameter to be optimized at each step without storing the full trajectory, as shown in Fig.~\ref{fig:insight}. Specifically, the training objective function is defined as:
\begin{equation}\small
    \begin{aligned}
        \mathcal{L}_{\mathrm{EasyTune}}(\theta) = -\mathbb{E}_{c\sim \mathbb{D}_\mathrm{T}, \mathbf{x}^\theta_t \sim \pi_\theta(\cdot|c), t\sim \mathcal{U}(0, T)} \left[ \mathcal{R}_{\phi} (\mathbf{x}^\theta_t, t, c) \right],
    \end{aligned}
    \label{eq:loss_ours}
\end{equation}
where \(\mathcal{R}_{\phi} (\mathbf{{x}}^\theta_t, t, c)\) is the reward value of the \emph{stop gradient} noised motion \(\mathbf{{x}}^\theta_t\) at time step \(t\), and \(\mathcal{U}(0, T)\) is a uniform distribution over the time steps. Here, the {stop gradient} noised motion \(\mathbf{{x}}^\theta_t\) and its gradient w.r.t. the diffusion parameter $\theta$ are represented as:
\begin{equation}\small
    \begin{split}
        \mathbf{x}^\theta_{t-1} &= \pi_\theta \big({\mathrm{sg}}(\mathbf{x}^\theta_t), t, c \big) \\
        &:= \frac{1}{\sqrt{\alpha_t}} \left( {\mathrm{sg}}(\mathbf{x}^\theta_t) - \frac{\beta_t}{\sqrt{1 - \bar{\alpha}_t}} \epsilon_\theta\big({\mathrm{sg}}(\mathbf{x}^\theta_t), t, c\big) \right),
    \end{split}
    \label{eq:ourgradient}
\end{equation}
where \({\mathrm{sg}}(\cdot)\) denotes the stop gradient operations. Eq. \eqref{eq:loss_ours} and Eq. \eqref{eq:ourgradient} indicate that EasyTune aims to optimize the diffusion model by maximizing the reward value of the noised motion \(\mathbf{x}^\theta_t\) at each step \(t\). Here, we discuss the effectiveness of EasyTune from theoretical and empirical perspectives.

\begin{corollary}\label{the:t2}
    Given the reverse process in Eq. \eqref{eq:ourgradient}, the gradient w.r.t. diffusion model $\theta$ is denoted as:
    \begin{equation}\small
        \begin{aligned}
            {\frac{\partial \mathbf{x}^{{\theta}}_{t-1}}{\partial {\theta}}} = \frac{\partial \pi_{{\theta}}\big({\mathrm{sg}}(\mathbf{x}^\theta_t), t, c\big)}{\partial {\theta}}.
        \end{aligned}
        \label{eq:ourgradient_the}
    \end{equation}
\end{corollary}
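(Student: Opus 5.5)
The plan is to apply the multivariate chain rule to the modified reverse step in Eq.~\eqref{eq:ourgradient}, exactly as in the derivation of Corollary~\ref{thm:t1}, and then use the defining property of the stop-gradient operator to kill the indirect term. Write $\mathbf{x}^\theta_{t-1} = F(\theta, \mathbf{u})$ with $F(\theta,\mathbf{u}) := \frac{1}{\sqrt{\alpha_t}}\bigl(\mathbf{u} - \frac{\beta_t}{\sqrt{1-\bar\alpha_t}}\epsilon_\theta(\mathbf{u}, t, c)\bigr)$ and $\mathbf{u} = \mathrm{sg}(\mathbf{x}^\theta_t)$. The total derivative then splits as
\begin{equation*}\small
\frac{\partial \mathbf{x}^\theta_{t-1}}{\partial \theta} = \frac{\partial F(\theta,\mathbf{u})}{\partial \theta}\Big|_{\mathbf{u}=\mathrm{sg}(\mathbf{x}^\theta_t)} + \frac{\partial F(\theta,\mathbf{u})}{\partial \mathbf{u}}\Big|_{\mathbf{u}=\mathrm{sg}(\mathbf{x}^\theta_t)}\cdot \frac{\partial\, \mathrm{sg}(\mathbf{x}^\theta_t)}{\partial \theta}.
\end{equation*}
The first summand is, by definition, $\partial \pi_\theta(\mathrm{sg}(\mathbf{x}^\theta_t), t, c)/\partial\theta$. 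This is the direct term of Eq.~\eqref{eq:recursive_gradient}, now evaluated at the detached state.

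The key step is to formalize $\mathrm{sg}$. I would treat it as the operator that acts as the identity in the forward pass, $\mathrm{sg}(\mathbf{x})=\mathbf{x}$, and whose Jacobian with respect to any parameter is zero in the backward pass. Equivalently, $\mathrm{sg}(\mathbf{x}^\theta_t)$ is the constant vector $\mathbf{x}^{\bar\theta}_t$, frozen at the current parameter value $\bar\theta$ and treated as independent of $\theta$. Hence $\partial\,\mathrm{sg}(\mathbf{x}^\theta_t)/\partial\theta = \mathbf{0}$. The indirect summand therefore vanishes, even though the factor $\partial\pi_\theta/\partial\mathbf{x}^\theta_t$ is finite, and Eq.~\eqref{eq:ourgradient_the} follows. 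I would also note that the forward value of $\mathbf{x}^\theta_{t-1}$ is unchanged, because $\mathrm{sg}$ is the identity on values. The trajectory sampled under Eq.~\eqref{eq:ourgradient} therefore coincides with that of Eq.~\eqref{eq:reverse_process}; only the derivative differs.

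The main obstacle is conceptual rather than computational. The stop-gradient is not a genuine function of $\theta$, so the statement must be read as a statement about the gradient computed by automatic differentiation, i.e., the derivative of $F(\theta,\mathbf{u})$ with $\mathbf{u}$ held fixed. I would make this convention explicit, with the parameter-freezing interpretation above, before applying the chain rule; once it is stated, the argument is a single line.

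As a consequence, I would remark that unrolling Eq.~\eqref{eq:ourgradient_the} no longer produces the product $\prod_{s}\partial\pi_\theta/\partial\mathbf{x}^\theta_s$ appearing in Eq.~\eqref{eq:final_gradient}. Each step's gradient needs only the current step's graph, which supports the memory and vanishing-gradient claims.
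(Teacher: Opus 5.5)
Your proposal is correct and follows the route the paper implicitly relies on: the paper writes out no separate proof of this corollary, and the intended argument is the chain-rule decomposition from the proof of Corollary~\ref{thm:t1}, with $\partial\,\mathrm{sg}(\mathbf{x}^\theta_t)/\partial\theta=\mathbf{0}$ killing the indirect term. Your explicit convention, that $\mathrm{sg}$ is the identity on values and has zero Jacobian (equivalently, $\mathbf{u}$ held fixed in $F(\theta,\mathbf{u})$), is exactly the assumption the result needs, and you state it more carefully than the paper does.
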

Corollary~\ref{the:t2} shows that EasyTune overcomes the recursive gradient issue, enabling efficient, fine-grained updates with substantially reduced memory. As Fig.~\ref{fig:memory} illustrates, while prior methods incur $\mathcal{O}(T)$ memory by storing the multi-steps trajectory, EasyTune maintains a constant $\mathcal{O}(1)$ memory. Thus, we optimize the loss function $\mathcal{L}_{\mathrm{EasyTune}}(\theta)$ as follows:
    \begin{equation}\small
        \begin{split}
        \frac{\partial \mathcal{L}_{\mathrm{EasyTune}} (\theta)}{\partial \theta} & =  -\mathbb{E}_{c\sim \mathbb{D}_\mathrm{T}, \mathbf{x}^\theta_t \sim \pi_\theta(\cdot|c), t\sim \mathcal{U}(0, T)}  \\ & \quad  \frac{\partial \mathcal{R}_\phi(\mathbf{x}^\theta_{t}, t, c)}{\partial \mathbf{x}^\theta_{t}}
        \cdot \frac{\partial \pi_{{\theta}}\big({\mathrm{sg}}(\mathbf{x}^\theta_{t+1}), t+1, c\big)}{\partial {\theta}}.
        \end{split}
        \label{eq:loss_ours_expanded}
    \end{equation}

\noindent \textbf{Empirical Memory Analysis.} As shown in Fig.~\ref{fig:memory}, existing methods incur memory usage that grows linearly with the number of denoising steps ($\mathcal{O}(T)$), while EasyTune maintains a constant memory footprint ($\mathcal{O}(1)$). 

\noindent \textbf{Curriculum Timestep Scheduling.}
As discussed above, the vanishing coefficient $\prod_{s=1}^{t-1} \frac{\partial \pi_\theta}{\partial \mathbf{x}^\theta_s}$ renders prior methods ineffective at high-noise steps. Although EasyTune removes this recursive dependency, uniformly sampling timesteps remains suboptimal, since early denoising steps mainly determine global structure while late steps refine local semantics~\cite{tan2026consistentrftreducingvisualhallucinations, xie2025dymo, zhou2025golden}. We therefore adopt a coarse-to-fine curriculum that gradually shifts optimization from high-noise to low-noise steps. Specifically, gradients are computed only within a sliding window of $k$ consecutive steps, $\mathcal{W}(p)=\{s(p),\ldots,s(p){+}k{-}1\}$, where the start index $s$ is scheduled by the training progress $p\in[0,1]$:
\begin{equation}\small
s(p) =
\begin{cases}
(T {-} k) - \mathrm{round}\!\left(\dfrac{p}{\rho} \cdot (T {-} k)\right), & p \leq \rho, \\[4pt]
0, & p > \rho,
\end{cases}
\label{eq:curriculum}
\end{equation}
where $\rho$ is the sweep ratio. The objective in Eq.~\eqref{eq:loss_ours} becomes:
\begin{equation}\small
\mathcal{L}_{\mathrm{EasyTune}}(\theta) = -\mathbb{E}_{c,\, \mathbf{x}^\theta_t \sim \pi_\theta(\cdot|c),\, t \in \mathcal{W}(p)} \!\left[ \mathcal{R}_{\phi}(\mathbf{x}^\theta_t, t, c) \right].
\label{eq:loss_curriculum}
\end{equation}
For $p \le \rho$, the window $\mathcal{W}(p)$ linearly sweeps from $\{T{-}k,\ldots,T{-}1\}$ to $\{0,\ldots,k{-}1\}$, progressively shifting focus from high-noise global structure to low-noise local details. Once $p > \rho$, the window is fixed at $\{0,\ldots,k{-}1\}$, concentrating the remaining budget on the final denoising steps where semantic details are most critical. We set $\rho{=}0.4$, $k{=}10$, and $T{=}50$ by default, so that the first 40\% of training addresses high-noise steps for global structure and the remaining 60\% focuses on low-noise steps for fine-grained refinement~\cite{tan2026consistentrftreducingvisualhallucinations}.

\section{Experiment}
\label{sec:Exp}
\subsection{Experimental Setup}
\label{sec:Exp_Setup}
\begin{table*}[b]
    \centering
    \renewcommand{\arraystretch}{1.1}
    \setlength{\tabcolsep}{6pt}
    \caption{\textbf{Text-motion retrieval results.} $^*$FlowMDM~\cite{barquero2024seamless} is reproduced and adapted to HumanML3D.}
    \resizebox{\textwidth}{!}{
    \begin{tabular}{l|c|ccc|ccccc|ccccc|c}
        \toprule
        \multirow{2}{*}{Methods} & \multirow{2}{*}{Noise} & \multicolumn{3}{c|}{Rep.} & \multicolumn{5}{c|}{Text-Motion Retrieval$\uparrow$} & \multicolumn{5}{c|}{Motion-Text Retrieval$\uparrow$} & \multirow{2}{*}{Avg.$\uparrow$} \\
        \cmidrule(lr){3-5}
        \cmidrule(lr){6-10} \cmidrule(lr){11-15}
        & & K & J & R & R@1 & R@2 & R@3 & R@5 & R@10 & R@1 & R@2 & R@3 & R@5 & R@10 & \\
        \midrule
        \multicolumn{16}{c}{\textit{Evaluation with Kinematic Representation}} \\
        \midrule
        TEMOS \cite{petrovich2022temos} & \ding{55} & \ding{51} & \ding{55} & \ding{55} & $40.49$ & $53.52$ & $61.14$ & $70.96$ & $84.15$ & $39.96$ & $53.49$ & $61.79$ & $72.40$ & $85.89$ & $62.38$\\
        T2M \cite{guo2022generating} & \ding{55} & \ding{51} & \ding{55} & \ding{55} & $52.48$ & $71.05$ & $80.65$ & $89.66$ & \underline{$96.58$} & $52.00$ & $71.21$ & $81.11$ & $89.87$ & {$96.78$} & $78.14$\\
        TMR \cite{petrovich2023tmr} & \ding{55} & \ding{51} & \ding{55} & \ding{55} & $67.16$ & $81.32$ & $86.81$ & $91.43$ & $95.36$ & $67.97$ & $81.20$ & $86.35$ & $91.70$ & $95.27$ & $84.46$\\
        LaMP \cite{li2025lamp} & \ding{55} & \ding{51} & \ding{55} & \ding{55} & $67.18$ & $81.90$ & {$87.04$} & \underline{$92.00$} & $95.73$ & {$68.02$} & {$82.10$} & {$87.50$} & {$92.20$} & \underline{$96.90$} & $85.06$\\
        {ReAlign} \cite{weng2025} & {\ding{51}} & \ding{51} & \ding{55} & \ding{55} & \underline{$67.59$} & \underline{$82.24$} & \underline{$87.44$} & $91.97$ & $96.28$ & \underline{$68.94$} & \underline{$82.86$} & \underline{$87.95$} & \underline{$92.44$} & $96.28$ & \underline{$85.40$}\\
        \rowcolor{gray!20} {MotionReward(Ours)} & {\ding{51}} & \ding{51} & \ding{51} & \ding{51} & \textbf{67.74} & \textbf{83.35} & \textbf{89.79} & \textbf{94.86} & \textbf{98.37} & \textbf{67.92} & \textbf{83.50} & \textbf{90.03} & \textbf{95.16} & \textbf{98.44} & \textbf{86.92}\\
        \midrule
        \multicolumn{16}{c}{\textit{Evaluation with Joint Representation}} \\
        \midrule
        ACMDM-Eval.~\cite{meng2025absolute} & \ding{55} & \ding{55} & \ding{51} & \ding{55} & $49.78$ & $69.61$ & $79.87$ & $89.05$ & $96.06$ & $50.28$ & $70.30$ & $80.26$ & $89.20$ & $96.57$ & $77.10$\\
        \rowcolor{gray!20} {MotionReward(Ours)} & {\ding{51}} & \ding{51} & \ding{51} & \ding{51} & \textbf{65.86} & \textbf{81.42} & \textbf{88.08} & \textbf{94.18} & \textbf{98.07} & \textbf{65.47} & \textbf{81.55} & \textbf{88.40} & \textbf{94.05} & \textbf{98.07} & \textbf{85.52}\\
        \midrule
        \multicolumn{16}{c}{\textit{Evaluation with Rotation Representation}} \\
        \midrule
        FlowMDM-Eval.$^*$~\cite{barquero2024seamless} & \ding{55} & \ding{55} & \ding{55} & \ding{51} & $47.39$ & $67.49$ & $76.45$ & $88.44$ & $96.39$ & $48.34$ & $68.73$ & $77.16$ & $87.72$ & $95.65$ & $75.38$\\
        \rowcolor{gray!20} {MotionReward(Ours)} & {\ding{51}} & \ding{51} & \ding{51} & \ding{51} & \textbf{64.19} & \textbf{82.13} & \textbf{89.27} & \textbf{93.75} & \textbf{97.87} & \textbf{63.42} & \textbf{80.85} & \textbf{88.42} & \textbf{94.12} & \textbf{97.72} & \textbf{85.17}\\
        \bottomrule
    \end{tabular}
    }
    \label{tab:retrieval}
\end{table*}

\noindent \textbf{Datasets.} Our experiments span three motion representations, kinematic, joint-based, and rotation, and three reward dimensions, motion semantics, human preference, and authenticity. We train the semantic reward on the kinematic-based HumanML3D~\cite{guo2022generating} and the preference reward on MotionPercept~\cite{motioncritic2025}, extending both to kinematic and joint-based versions. For authenticity, we use MLD~\cite{chen2023executing} and ACMDM~\cite{meng2025absolute} to generate fake motions and pair them with ground-truth real motions, forming a multi-representation deepfake detection dataset.

\noindent \textbf{Evaluation.} We evaluate MotionReward on semantic alignment, human preference, and authenticity, and further assess the fine-tuned generators. For semantic reward, we report text-to-motion and motion-to-text retrieval accuracy in a candidate set of size 32 using Top\,$k$ for $k \in \{1,2,3,5,10\}$. For human preference reward, we use the official MotionCritic test set, randomly swap 50\% of the labeled pairs, and predict which motion is preferred based on reward scores, reporting accuracy, precision, recall, and F1. For authenticity reward, we classify real versus generated motions and report the same four metrics. For generative models, following HumanML3D~\cite{guo2022generating}, we report R-Precision, Fr\'echet Inception Distance (FID), Multi Modal Distance (MMDist), and Diversity, along with peak memory usage, critic score, and authenticity score.

\noindent \textbf{Implementation.} Training consists of two stages: MotionReward training and generative model fine-tuning. For MotionReward, we train the shared backbone end-to-end for text-motion alignment with a learning rate of $1\times10^{-4}$, batch size 128, and 50 epochs across kinematic, joint-based, and rotation representations. Loss weights follow the settings in Sec.\ref{sec:unified_reward}. We then freeze the backbone and train task-specific LoRA adapters with rank $r{=}16$ and $\alpha{=}2r$; the critic head trains for up to 2{,}000 epochs and the detection head for 50 epochs.

We evaluate EasyTune~\cite{tan2026easytune} on kinematic-based models MLD~\cite{chen2023executing}, MLD++~\cite{motionlcm-v2}, MotionLCM~\cite{Dai2025}, and MDM~\cite{tevet2023human}, joint-based ACMDM~\cite{meng2025absolute}, and rotation-based HY Motion~\cite{hymotion2025}. We fine-tune with a learning rate of $1\times10^{-5}$, cosine scheduling, 100 warmup steps, and gradient clipping at max norm 1.0. ACMDM uses batch size 32 for 10 epochs and HY Motion uses batch size 8 for 4 epochs. For HY Motion evaluation, we train an evaluator based on ReAlign~\cite{weng2025} and report the mean over 5 repetitions with 95\% confidence intervals. We compare other reinforcement fine-tune methods to evaluate the effectiveness of ours, including ReFL~\cite{xu2023imagereward}, DRaFT~\cite{clark2024directly}, DRTune~\cite{wu2025drtune}, and AlignProp~\cite{prabhudesai2023aligning}.

\subsection{Effectiveness on Unified Motion Reward Model}

\begin{table}[t]
    \centering
    \renewcommand{\arraystretch}{1.1}
    \setlength{\tabcolsep}{6pt}
    \caption{\textbf{Evaluation on human preference prediction.} $^\dagger$MotionCritic was originally designed for 24-joint representations; we reproduce it on our 22-joint setting.}
    \resizebox{\columnwidth}{!}{
    \begin{tabular}{l|ccc|cccc}
        \toprule
        \multirow{2}{*}{Methods} & \multicolumn{3}{c|}{Rep.} & \multicolumn{4}{c}{Human Preference} \\
        \cmidrule(lr){2-4} \cmidrule(lr){5-8}
        & K & J & R & Acc$\uparrow$ & Precision$\uparrow$ & Recall$\uparrow$ & F1$\uparrow$ \\
        \midrule
        \multicolumn{8}{c}{\textit{Kinematic Representation}} \\
        \midrule
        MotionCritic$^\dagger$~\cite{motioncritic2025} & \ding{51} & \ding{55} & \ding{55} & 85.08 & 84.76 & 85.61 & 85.18 \\
        \rowcolor{gray!20} {MotionReward (Ours)} & \ding{51} & \ding{51} & \ding{51} & 86.09 & 86.20 & 86.02 & 86.11 \\
        \midrule
        \multicolumn{8}{c}{\textit{Joint Representation}} \\
        \midrule
        MotionCritic$^\dagger$~\cite{motioncritic2025} & \ding{55} & \ding{51} & \ding{55} & 84.18 & 85.13 & 83.55 & 84.33 \\
        \rowcolor{gray!20} {MotionReward (Ours)} & \ding{51} & \ding{51} & \ding{51} & 85.52 & 86.29 & 85.10 & 85.69 \\
        \midrule
        \multicolumn{8}{c}{\textit{Rotation Representation}} \\
        \midrule
        MotionCritic$^\dagger$~\cite{motioncritic2025} & \ding{55} & \ding{55} & \ding{51} & 84.34 & 84.19 & 84.36 & 84.28 \\
        \rowcolor{gray!20} {MotionReward (Ours)} & \ding{51} & \ding{51} & \ding{51} & 85.61 & 86.01 & 84.88 & 85.44 \\
        \bottomrule
    \end{tabular}
    }
    \label{tab:pref}
\end{table}

We evaluate MotionReward on HumanML3D~\cite{guo2022generating}, the MotionPercept benchmark~\cite{motioncritic2025}, and our multi-representation deepfake detection dataset, under kinematic features, joint coordinates, and rotation representations, following the experimental setup in Sec.~\ref{sec:Exp_Setup}. Tab.~\ref{tab:retrieval} reports semantic alignment results. MotionReward achieves the best text-motion retrieval accuracy across all three representations, improving over ReAlign~\cite{weng2025} by 1.5\% on kinematic features and outperforming the corresponding baselines by over 8\% on joint and rotation representations. Tab.~\ref{tab:pref} reports human preference prediction, where MotionReward consistently improves accuracy and F1 over MotionCritic~\cite{motioncritic2025} across representations, while MotionCritic requires separate models for each representation. Fig.~\ref{fig:faithful} shows strong authenticity modeling via deepfake detection, with accuracy above 91\%, F1 above 95\%, and recall over 99\%. Overall, these results indicate that a single MotionReward model can provide stronger rewards across the three tasks, consistent with our unified representation learning and task-specific LoRA adaptation.

\begin{figure}[tb]
    \centering
    \definecolor{tab10blue}{HTML}{1F77B4}
    \definecolor{tab10orange}{HTML}{FF7F0E}
    \definecolor{tab10green}{HTML}{2CA02C}
    \definecolor{tab10red}{HTML}{D62728}
    \begin{tikzpicture}
        \begin{axis}[
            ybar=3pt,
            bar width=10pt,
            width=0.92\columnwidth,
            height=4.2cm,
            ylabel={},
            symbolic x coords={Kinematic, Joint},
            xtick=data,
            xticklabel style={font=\fontsize{7}{8}\selectfont\fontfamily{ptm}\selectfont},
            yticklabel style={font=\fontsize{6}{7}\selectfont\fontfamily{ptm}\selectfont},
            ymin=88, ymax=101.5,
            ytick={88,90,92,94,96,98,100},
            legend style={
                at={(0.5,1.03)}, anchor=south,
                legend columns=4,
                font=\scriptsize\fontfamily{ptm}\selectfont,
                draw=none,
                fill=none,
                column sep=4pt,
                /tikz/every even column/.append style={column sep=8pt},
            },
            nodes near coords,
            nodes near coords style={font=\fontsize{5}{6}\selectfont, /pgf/number format/fixed, /pgf/number format/precision=1},
            every node near coord/.append style={above, yshift=0pt},
            enlarge x limits=0.5,
            ymajorgrids=true,
            xmajorgrids=false,
            grid style={gray!50, dashed},
            axis x line*=bottom,
            axis y line*=left,
            axis line style={black!70},
            tick style={black!70},
            every axis plot/.append style={thick},
            area legend,
        ]
        \addplot[fill=tab10blue, draw=tab10blue!80!black, draw opacity=0.8] coordinates {(Kinematic, 91.78) (Joint, 93.92)};
        \addplot[fill=tab10orange, draw=tab10orange!80!black, draw opacity=0.8] coordinates {(Kinematic, 91.43) (Joint, 93.58)};
        \addplot[fill=tab10green, draw=tab10green!80!black, draw opacity=0.8] coordinates {(Kinematic, 99.67) (Joint, 99.69)};
        \addplot[fill=tab10red, draw=tab10red!80!black, draw opacity=0.8] coordinates {(Kinematic, 95.37) (Joint, 96.54)};
        \legend{Accuracy, Precision, Recall, F1 Score}
        \end{axis}
    \end{tikzpicture}
    \caption{{Evaluation on motion authenticity.}}
    \label{fig:faithful}
\end{figure}
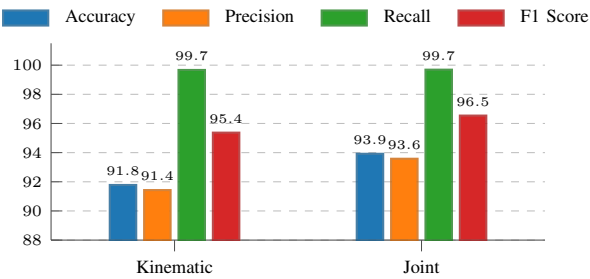

\subsection{Effectiveness on Reinforcement Fine-Tuning}

\noindent \textbf{Memory Efficiency.} As illustrated in Fig.~\ref{fig:memoryexp}, we compare the memory trajectory of EasyTune~\cite{tan2026easytune} and existing fine-tuning methods throughout the optimization process. Our method performs multiple optimization steps within a single denoising trajectory, leading to high average memory utilization. However, the peak memory consumption of EasyTune is significantly lower than that of existing methods, mainly due to the $\mathcal{O}(1)$ memory growth of the denoising process. A lower peak footprint reduces hardware requirements, while higher utilization indicates more efficient use of GPU resources.

\begin{figure}[tbp]
    \centering
    \includegraphics[width=\columnwidth]{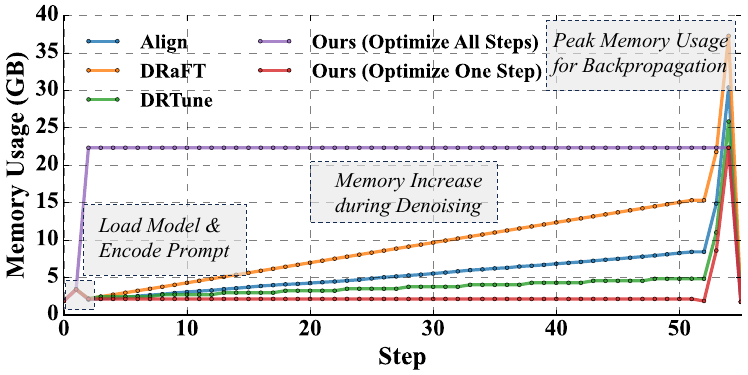}
    \caption{{Memory trajectory during optimization.}}
    \label{fig:memoryexp}
\end{figure}

\noindent \textbf{Comparison with SoTA Fine-Tuning Methods.} To assess the effectiveness and efficiency of {EasyTune}~\cite{tan2026easytune}, we compare it with recent state-of-the-art fine-tuning methods, including DRaFT, AlignProp, and DRTune, as shown in Tab.~\ref{tab:addlabel}. {EasyTune} achieves the best overall performance across key metrics. Specifically, it attains the lowest FID of 0.132, improving by 70.7\%, the best MM Dist of 2.637 with a 13.6\% gain, and the closest Diversity to real motion at 9.465. Meanwhile, EasyTune requires only 22.10 GB of GPU memory, the least among all compared methods. We attribute these gains to two core designs: optimizing rewards at each denoising step for finer supervision, and discarding redundant computation graphs to reduce memory usage. 

\definecolor{cDRaFT}{HTML}{1F77B4}
\definecolor{cAlignProp}{HTML}{FF7F0E}
\definecolor{cDRTune}{HTML}{2CA02C}
\definecolor{cReFL}{HTML}{9467BD}
\definecolor{cOurs}{HTML}{D62728}
\definecolor{cPref}{HTML}{1F77B4}
\definecolor{cAuth}{HTML}{FF7F0E}
\begin{figure}[tbp]
    \centering
    \begin{minipage}[t]{0.48\linewidth}
        \centering
        \resizebox{\linewidth}{!}{%
        \begin{tikzpicture}
        \begin{axis}[
            width=6.5cm,
            height=4.6cm,
            ylabel={Time (s)},
            ylabel style={font=\fontsize{9}{10}\selectfont\fontfamily{ptm}\selectfont},
            xlabel={Reward Score},
            xlabel style={font=\fontsize{9}{10}\selectfont\fontfamily{ptm}\selectfont},
            xmin=0.675, xmax=0.875,
            xtick={0.70,0.75,0.80,0.85},
            xticklabel style={font=\fontsize{8}{9}\selectfont\fontfamily{ptm}\selectfont},
            ymin=0, ymax=2900,
            ytick={0,500,1000,1500,2000,2500},
            yticklabel style={font=\fontsize{8}{9}\selectfont\fontfamily{ptm}\selectfont},
            legend style={
                at={(0.5,1.03)}, anchor=south,
                legend columns=3,
                font=\fontsize{7.5}{8.5}\selectfont\fontfamily{ptm}\selectfont,
                draw=none,
                fill=none,
                column sep=3pt,
                /tikz/every even column/.append style={column sep=5pt},
            },
            ymajorgrids=true,
            xmajorgrids=false,
            grid style={gray!50, dashed},
            axis x line*=bottom,
            axis y line*=left,
            axis line style={black!70},
            tick style={black!70},
            every axis plot/.append style={thick},
            clip=false,
        ]
        \draw[->, >=stealth, thick, black!60] (axis cs:0.82, 2750) -- (axis cs:0.86, 2750);
        \node[font=\fontsize{7}{8}\selectfont\fontfamily{ptm}\selectfont, anchor=east, black!60] at (axis cs:0.82, 2750) {\textit{better}};
        \addplot[color=cDRaFT, mark=triangle*, mark size=2pt, thick] coordinates {
            (0.70, 466.27) (0.75, 2616.54)
        };
        \node[font=\fontsize{7}{8}\selectfont, anchor=west, cDRaFT] at (axis cs:0.753, 2617) {2617};
        \node[font=\fontsize{6}{7}\selectfont\fontfamily{ptm}\selectfont, anchor=south, text=white, fill=cDRaFT, draw=cDRaFT, rounded corners=1pt, inner sep=1.5pt] at (axis cs:0.75, 2740) {\textit{convergence}};
        \addplot[color=cAlignProp, mark=square*, mark size=1.8pt, thick] coordinates {
            (0.70, 271.99) (0.75, 971.55)
        };
        \node[font=\fontsize{7}{8}\selectfont, anchor=west, cAlignProp] at (axis cs:0.753, 972) {972};
        \node[font=\fontsize{6}{7}\selectfont\fontfamily{ptm}\selectfont, anchor=south, text=white, fill=cAlignProp, draw=cAlignProp, rounded corners=1pt, inner sep=1.5pt] at (axis cs:0.75, 1095) {\textit{convergence}};
        \addplot[color=cDRTune, mark=diamond*, mark size=2pt, thick] coordinates {
            (0.70, 554.77) (0.75, 2009.59)
        };
        \node[font=\fontsize{7}{8}\selectfont, anchor=west, cDRTune] at (axis cs:0.753, 2010) {2010};
        \node[font=\fontsize{6}{7}\selectfont\fontfamily{ptm}\selectfont, anchor=south, text=white, fill=cDRTune, draw=cDRTune, rounded corners=1pt, inner sep=1.5pt] at (axis cs:0.75, 2130) {\textit{convergence}};
        \addplot[color=cReFL, mark=pentagon*, mark size=2pt, thick] coordinates {
            (0.70, 820.29)
        };
        \node[font=\fontsize{6}{7}\selectfont\fontfamily{ptm}\selectfont, anchor=south, text=white, fill=cReFL, draw=cReFL, rounded corners=1pt, inner sep=1.5pt] at (axis cs:0.70, 920) {\textit{convergence}};
        \addplot[color=cOurs, mark=*, mark size=2pt, very thick] coordinates {
            (0.70, 263.36) (0.75, 358.17) (0.80, 452.53) (0.85, 1025.17)
        };
        \node[font=\fontsize{7}{8}\selectfont, anchor=west, cOurs] at (axis cs:0.753, 358) {\textbf{358}};
        \node[font=\fontsize{7}{8}\selectfont, anchor=west, cOurs] at (axis cs:0.803, 453) {\textbf{453}};
        \node[font=\fontsize{7}{8}\selectfont, anchor=west, cOurs] at (axis cs:0.853, 1025) {\textbf{1025}};
        \node[font=\fontsize{6}{7}\selectfont\fontfamily{ptm}\selectfont, anchor=south, text=white, fill=cOurs, draw=cOurs, rounded corners=1pt, inner sep=1.5pt] at (axis cs:0.85, 1150) {\textit{convergence}};
        \legend{DRaFT, AlignProp, DRTune, ReFL, \textbf{EasyTune}}
        \end{axis}
        \end{tikzpicture}%
        }
        \caption{{Computational overhead for training.} The x-axis is the training reward score.}
        \label{fig:overhead} 
    \end{minipage}
    \hfill
    \begin{minipage}[t]{0.48\linewidth}
        \centering
        \resizebox{\linewidth}{!}{%
        \begin{tikzpicture}
        \begin{axis}[
            width=6.5cm,
            height=5.2cm,
            ylabel={Preference Reward},
            ylabel style={font=\fontsize{9}{10}\selectfont\fontfamily{ptm}\selectfont},
            xlabel={Training Step},
            xlabel style={font=\fontsize{9}{10}\selectfont\fontfamily{ptm}\selectfont},
            xmin=-30, xmax=1230,
            xtick={0,300,600,900,1200},
            xticklabel style={font=\fontsize{8}{9}\selectfont\fontfamily{ptm}\selectfont},
            ymin=-11.2, ymax=-7.2,
            ytick={-11,-10,-9,-8},
            yticklabel style={font=\fontsize{8}{9}\selectfont\fontfamily{ptm}\selectfont},
            legend style={
                at={(0.5,1.03)}, anchor=south,
                legend columns=2,
                font=\fontsize{7.5}{8.5}\selectfont\fontfamily{ptm}\selectfont,
                draw=none,
                fill=none,
                column sep=5pt,
                /tikz/every even column/.append style={column sep=5pt},
            },
            ymajorgrids=true,
            xmajorgrids=false,
            grid style={gray!50, dashed},
            axis x line*=bottom,
            axis y line*=left,
            axis line style={black!70},
            tick style={black!70},
            clip=true,
        ]
        \addplot[color=cPref, mark=none, line width=0.5pt, opacity=0.25, forget plot] coordinates {
            (0,-10.754) (20,-10.589) (40,-10.385) (60,-10.539) (80,-10.394)
            (96,-10.442) (100,-10.230) (120,-9.587) (140,-9.363) (160,-9.507)
            (180,-9.478) (192,-9.214) (200,-9.431) (220,-8.887) (240,-8.957)
            (260,-9.324) (280,-9.155) (288,-8.846) (300,-8.943) (320,-8.862)
            (340,-8.828) (360,-9.263) (380,-9.441) (384,-9.391) (400,-9.039)
            (420,-9.421) (440,-8.701) (460,-9.096) (480,-9.211) (500,-9.420)
            (520,-8.807) (540,-8.690) (560,-8.954) (576,-8.993) (580,-8.533)
            (600,-8.700) (620,-8.027) (640,-8.818) (660,-9.066) (672,-8.790)
            (680,-8.541) (700,-8.621) (720,-9.097) (740,-9.043) (760,-8.965)
            (768,-8.832) (780,-8.909) (800,-8.118) (820,-8.223) (840,-8.657)
            (860,-8.500) (864,-8.807) (880,-8.091) (900,-8.393) (920,-8.731)
            (940,-7.997) (960,-8.433) (980,-8.501) (1000,-8.429) (1020,-8.765)
            (1040,-8.173) (1056,-8.357) (1060,-8.247) (1080,-8.328) (1100,-7.630)
            (1120,-8.418) (1140,-8.194) (1152,-7.951) (1160,-8.127) (1180,-8.064)
            (1200,-7.954)
        };
        \addplot[color=cPref, mark=none, thick, forget plot] coordinates {
            (0,-10.754) (40,-10.576) (80,-10.440) (120,-10.003)
            (160,-9.697) (200,-9.419) (240,-9.192) (280,-9.125)
            (320,-9.064) (360,-9.025) (400,-9.030) (440,-9.000)
            (480,-8.950) (520,-8.870) (560,-8.810) (600,-8.720)
            (640,-8.680) (680,-8.680) (720,-8.750) (760,-8.730)
            (800,-8.510) (840,-8.480) (880,-8.410) (920,-8.390)
            (960,-8.350) (1000,-8.340) (1040,-8.270) (1080,-8.170)
            (1120,-8.100) (1160,-8.050) (1200,-7.954)
        };
        \addlegendimage{color=cPref, thick}
        \addlegendentry{Preference}
        \addlegendimage{color=cAuth, thick}
        \addlegendentry{Authenticity ($\times 10^3$)}
        \end{axis}
        \begin{axis}[
            width=6.5cm,
            height=5.2cm,
            axis y line*=right,
            axis x line=none,
            ylabel={Authenticity ($\times 10^3$)},
            ylabel style={font=\fontsize{9}{10}\selectfont\fontfamily{ptm}\selectfont},
            xmin=-30, xmax=1230,
            ymin=-0.1, ymax=0.9,
            ytick={0, 0.2, 0.4, 0.6, 0.8},
            yticklabel style={font=\fontsize{8}{9}\selectfont\fontfamily{ptm}\selectfont},
            axis line style={black!70},
            tick style={black!70},
            ymajorgrids=false,
            clip=true,
        ]
        \addplot[color=cAuth, mark=none, line width=0.5pt, opacity=0.25] coordinates {
            (0,0.064) (20,0.082) (40,0.073) (60,0.171) (80,0.290)
            (96,0.078) (100,0.305) (120,0.233) (140,0.085) (160,0.276)
            (180,0.161) (192,0.083) (200,0.166) (220,0.071) (240,0.072)
            (260,0.224) (280,0.166) (288,0.123) (300,0.126) (320,0.229)
            (340,0.180) (360,0.144) (380,0.258) (384,0.146) (400,0.133)
            (420,0.117) (440,0.130) (460,0.266) (480,0.121) (500,0.231)
            (520,0.251) (540,0.201) (560,0.196) (576,0.389) (580,0.435)
            (600,0.377) (620,0.402) (640,0.383) (660,0.353) (672,0.558)
            (680,0.168) (700,0.430) (720,0.329) (740,0.348) (760,0.236)
            (768,0.340) (780,0.657) (800,0.423) (820,0.464) (840,0.359)
            (860,0.332) (864,0.351) (880,0.438) (900,0.625) (920,0.188)
            (940,0.273) (960,0.230) (980,0.344) (1000,0.516) (1020,0.277)
            (1040,0.392) (1056,0.469) (1060,0.346) (1080,0.352) (1100,0.173)
            (1120,0.496) (1140,0.307) (1152,0.257) (1160,0.461) (1180,0.712)
            (1200,0.361)
        };
        \addplot[color=cAuth, mark=none, thick] coordinates {
            (0,0.064) (40,0.073) (80,0.145) (120,0.185)
            (160,0.175) (200,0.135) (240,0.120) (280,0.160)
            (320,0.180) (360,0.170) (400,0.160) (440,0.165)
            (480,0.195) (520,0.225) (560,0.270) (600,0.360)
            (640,0.380) (680,0.340) (720,0.340) (760,0.320)
            (800,0.400) (840,0.385) (880,0.390) (920,0.350)
            (960,0.300) (1000,0.370) (1040,0.385) (1080,0.340)
            (1120,0.340) (1160,0.400) (1200,0.361)
        };
        \end{axis}
        \end{tikzpicture}%
        }
        \caption{{Authenticity and preference reward curves during MotionRFT training.} } 
        \label{fig:training_curve}
    \end{minipage}
\end{figure}

\begin{table*}[t]
    \centering

\caption{\textbf{Text-to-motion generation results on HumanML3D.}}
    \footnotesize
    \resizebox{\textwidth}{!}{%
    \begin{tabular}{l cccccc}
      \toprule
     \multirow{2}{*}{Method} & \multicolumn{3}{c}{R Precision $\uparrow$} & \multirow{2}{*}{FID $\downarrow$} & \multirow{2}{*}{MM Dist $\downarrow$} & \multirow{2}{*}{Diversity $\rightarrow$} \\
      \cmidrule(lr){2-4}
       ~& Top 1 & Top 2 & Top 3 & & & \\
      \midrule
      \multicolumn{7}{c}{\textit{Kinematic Representation}} \\
      \midrule
     Real       & 0.511 & 0.703 & 0.797 & 0.002 & 2.974 & 9.503 \\
      TM2T  \cite{guo2022tm2t}&0.424$^{\pm0.003}$ & 0.618$^{\pm0.003}$ & 0.729$^{\pm0.002}$ & 1.501$^{\pm0.017}$ & 3.467$^{\pm0.011}$ & 8.589$^{\pm0.076}$\\

      T2M   \cite{guo2022generating} & 0.455$^{\pm 0.002}$ & 0.636$^{\pm 0.003}$ & 0.736$^{\pm 0.003}$ & 1.087$^{\pm 0.002}$ & 3.347$^{\pm 0.008}$ & 9.175$^{\pm 0.002}$ \\
      MDM  \cite{tevet2023human} & 0.455$^{\pm 0.006}$ & 0.645$^{\pm 0.007}$ & 0.749$^{\pm 0.006}$ & 0.489$^{\pm 0.047}$ & 3.330$^{\pm 0.025}$ & 9.920$^{\pm 0.083}$ \\

      T2M-GPT \cite{zhang2023generating} & 0.492$^{\pm 0.003}$ & 0.679$^{\pm 0.002}$ & 0.775$^{\pm 0.002}$ & 0.141$^{\pm 0.005}$ & 3.121$^{\pm 0.009}$ & 9.722$^{\pm 0.082}$\\
      ReMoDiffuse  \cite{zhang2023remodiffuse}&0.510$^{\pm0.005}$ & 0.698$^{\pm0.006}$ & 0.795$^{\pm0.004}$ & 0.103$^{\pm0.004}$ & 2.974$^{\pm0.016}$ & 9.018$^{\pm0.075}$\\						
      AttT2M \cite{zhong2023attt2m}&0.499$^{\pm 0.003}$ & 0.690$^{\pm 0.002}$ & 0.786$^{\pm 0.002}$ & 0.112$^{\pm 0.006}$ & 3.038$^{\pm 0.007}$ & 9.700$^{\pm 0.090}$\\

      MotionDiffuse  \cite{zhang2022motiondiffuse}  & 0.491$^{\pm 0.001}$ & 0.681$^{\pm 0.001}$ & 0.775$^{\pm 0.001}$ & 0.630$^{\pm 0.001}$ & 3.113$^{\pm 0.001}$ & 9.410$^{\pm 0.049}$ \\
      MotionLCM  \cite{Dai2025} & 0.502$^{\pm 0.003}$ & 0.698$^{\pm 0.002}$ & 0.798$^{\pm 0.002}$ & 0.304$^{\pm 0.012}$ & 3.012$^{\pm 0.007}$ & 9.607$^{\pm 0.066}$ \\
      MotionMamba  \cite{motionmaba} & 0.502$^{\pm 0.003}$ & 0.693$^{\pm 0.002}$ & 0.792$^{\pm 0.002}$ & 0.281$^{\pm 0.011}$ & 3.060$^{\pm 0.000}$  & 9.871$^{\pm 0.084}$\\ 
      CoMo  \cite{Huang2024CoMo}  & 0.502$^{\pm 0.002}$ & 0.692$^{\pm 0.007}$ & 0.790$^{\pm 0.002}$ & 0.262$^{\pm 0.004}$ & 3.032$^{\pm 0.015}$ & 9.936$^{\pm 0.066}$ \\
      ParCo   \cite{zou2024parco}  & 0.515$^{\pm 0.003}$ & 0.706$^{\pm 0.003}$ & 0.801$^{\pm 0.002}$ & 0.109$^{\pm 0.005}$ & 2.927$^{\pm 0.008}$ & \underline{9.576$^{\pm 0.088}$} \\
     SoPo \cite{tan2024sopo}  & {0.528$^{\pm 0.005}$} & {0.722$^{\pm 0.004}$} & {0.827$^{\pm 0.004}$ } & {0.174$^{\pm 0.005}$} & {2.939$^{\pm 0.011}$} & 9.584$^{\pm 0.074}$ \\
      \midrule
      MLD \cite{chen2023executing} (Base Model)  & 0.504$^{\pm 0.002}$ & 0.698$^{\pm 0.003}$ & 0.796$^{\pm 0.002}$ & 0.450$^{\pm 0.011}$ & 3.052$^{\pm 0.009}$ & 9.634$^{\pm 0.064}$ \\
      \rowcolor{gray!20} w/ EasyTune (Ours) & 
      0.581$^{\pm 0.003}$\,(+15.3\%) & 
      0.769$^{\pm 0.002}$\,(+10.2\%) & 
      0.855$^{\pm 0.002}$\,(+7.4\%) & 
      0.132$^{\pm 0.005}$\,(+70.7\%) & 
      2.637$^{\pm 0.007}$\,(+13.6\%) & 
      \textbf{9.465$^{\pm 0.075}$}\,(+0.09) \\
      \rowcolor{gray!20} w/ MotionRFT (Ours) &
      \underline{0.593$^{\pm 0.002}$}\,(+17.6\%) & 
      \underline{0.781$^{\pm 0.002}$}\,(+11.8\%) & 
      \underline{0.863$^{\pm 0.002}$}\,(+8.4\%) & 
      0.101$^{\pm 0.004}$\,(+77.6\%) & 
      \underline{2.544$^{\pm 0.007}$}\,(+16.6\%) & 
      9.613$^{\pm 0.091}$\,(+0.02) \\[0.3em] 
      {MLD++ \cite{motionlcm-v2} (Base Model) } & 0.548$^{\pm 0.003}$ & 0.738$^{\pm 0.003}$ & 0.829$^{\pm 0.002}$ & 0.073$^{\pm 0.003}$ & 2.810$^{\pm 0.008}$ & 9.658$^{\pm 0.089}$ \\
      \rowcolor{gray!20} w/ EasyTune (Ours) & 
      0.591$^{\pm 0.004}$\,(+7.8\%) & 
      0.777$^{\pm 0.002}$\,(+5.3\%) & 
      0.859$^{\pm 0.002}$\,(+3.6\%) & 
      \underline{0.069$^{\pm 0.003}$}\,(+5.5\%) & 
      2.592$^{\pm 0.008}$\,(+7.8\%) & 
      9.705$^{\pm 0.086}$\,(-0.05) \\
      \rowcolor{gray!20} w/ MotionRFT (Ours) & 
      \textbf{0.603$^{\pm 0.003}$}\,(+10.0\%) & 
      \textbf{0.786$^{\pm 0.003}$}\,(+6.5\%) & 
      \textbf{0.872$^{\pm 0.003}$}\,(+5.2\%) & 
      \textbf{0.052$^{\pm 0.002}$}\,(+28.8\%) & 
      \textbf{2.528$^{\pm 0.009}$}\,(+10.0\%) & 
      9.683$^{\pm 0.076}$\,(-0.03) \\
      \midrule
      \multicolumn{7}{c}{\textit{Joint Representation}} \\
      \midrule
     Real       & 0.503 & 0.696 & 0.795 & 0.000 & 3.244 & - \\
      MDM$^{50S}$  \cite{tevet2023human} & 0.440$^{\pm 0.007}$ & 0.636$^{\pm 0.006}$ & 0.742$^{\pm 0.004}$ & 0.518$^{\pm 0.032}$ & 3.640$^{\pm 0.028}$ & - \\
      MotionDiffuse  \cite{zhang2022motiondiffuse} & 0.450$^{\pm 0.006}$ & 0.641$^{\pm 0.005}$ & 0.753$^{\pm 0.005}$ & 0.778$^{\pm 0.005}$ & 3.490$^{\pm 0.023}$ & - \\
      ReMoDiffuse  \cite{zhang2023remodiffuse} & 0.468$^{\pm 0.003}$ & 0.653$^{\pm 0.003}$ & 0.754$^{\pm 0.005}$ & 0.883$^{\pm 0.021}$ & 3.414$^{\pm 0.020}$ & - \\
      MLD++  \cite{motionlcm-v2} & 0.500$^{\pm 0.003}$ & 0.691$^{\pm 0.002}$ & 0.789$^{\pm 0.001}$ & 2.027$^{\pm 0.021}$ & 3.220$^{\pm 0.008}$ & - \\
      MotionLCM  \cite{Dai2025} & 0.501$^{\pm 0.002}$ & 0.693$^{\pm 0.002}$ & 0.790$^{\pm 0.002}$ & 2.267$^{\pm 0.023}$ & \underline{3.192$^{\pm 0.009}$} & - \\
      MARDM-$\epsilon$  \cite{meng2024rethinking} & 0.492$^{\pm 0.006}$ & 0.690$^{\pm 0.005}$ & 0.790$^{\pm 0.005}$ & 0.116$^{\pm 0.004}$ & 3.349$^{\pm 0.010}$ & - \\
      MARDM-$v$  \cite{meng2024rethinking} & 0.500$^{\pm 0.004}$ & 0.695$^{\pm 0.003}$ & 0.795$^{\pm 0.003}$ & 0.114$^{\pm 0.007}$ & 3.270$^{\pm 0.009}$ & - \\
      \midrule
      ACMDM  \cite{meng2025absolute} (Base Model) & \underline{0.508$^{\pm 0.002}$} & \underline{0.701$^{\pm 0.003}$} & \underline{0.798$^{\pm 0.003}$} & \underline{0.109$^{\pm 0.005}$} & 3.253$^{\pm 0.010}$ & - \\
      \rowcolor{gray!20} w/ MotionRFT (Ours) & 
      \textbf{0.517$^{\pm 0.002}$}\,(+1.8\%) & 
      \textbf{0.734$^{\pm 0.002}$}\,(+4.7\%) & 
      \textbf{0.819$^{\pm 0.002}$}\,(+2.6\%) & 
      \textbf{0.084$^{\pm 0.004}$}\,(+22.9\%) & 
      \textbf{3.172$^{\pm 0.010}$}\,(+2.5\%) & - \\
      \midrule
      \multicolumn{7}{c}{\textit{Rotation Representation}} \\
      \midrule
     Real & 0.588 & 0.749 & 0.820 & 0.000 & 0.763 & 1.398 \\
      \midrule
      HY Motion-lite \cite{hymotion2025} (Base Model) & 0.554$^{\pm 0.003}$ & 0.718$^{\pm 0.004}$ & 0.797$^{\pm 0.002}$ & 0.067$^{\pm 0.000}$ & 0.831$^{\pm 0.002}$ & 1.356$^{\pm 0.011}$ \\
      \rowcolor{gray!20} w/ MotionRFT (Ours) & 
      \underline{0.601$^{\pm 0.003}$}\,(+8.5\%) & 
      \underline{0.772$^{\pm 0.003}$}\,(+7.5\%) & 
      \underline{0.843$^{\pm 0.003}$}\,(+5.8\%) & 
      \underline{0.061$^{\pm 0.000}$}\,(+9.0\%) & 
      \underline{0.772$^{\pm 0.003}$}\,(+7.1\%) & 
      \underline{1.375$^{\pm 0.015}$}\,(+0.02) \\[0.3em]
      HY Motion \cite{hymotion2025} (Base Model) & 0.563$^{\pm 0.002}$ & 0.732$^{\pm 0.003}$ & 0.808$^{\pm 0.002}$ & 0.073$^{\pm 0.000}$ & 0.818$^{\pm 0.002}$ & 1.349$^{\pm 0.013}$ \\
      \rowcolor{gray!20} w/ MotionRFT (Ours) & 
      \textbf{0.634$^{\pm 0.002}$}\,(+12.6\%) & 
      \textbf{0.803$^{\pm 0.002}$}\,(+9.7\%) & 
      \textbf{0.874$^{\pm 0.003}$}\,(+8.2\%) & 
      \textbf{0.056$^{\pm 0.000}$}\,(+23.3\%) & 
      \textbf{0.726$^{\pm 0.002}$}\,(+11.2\%) & 
      \textbf{1.380$^{\pm 0.012}$}\,(+0.03) \\
      \bottomrule
\end{tabular}}%

\label{tab:sota_humanmld3d}
\end{table*}

    \begin{table}[t]
      \centering
          \setlength{\tabcolsep}{0.5pt} 
          \caption{\textbf{Performance enhancement of diffusion-based motion generation methods.}}
        
          \resizebox{\columnwidth}{!}{
          \begin{tabular}{l@{\hspace{2pt}} l l l l l l}
              \toprule
              \multirow{2}{*}{Method} & \multicolumn{3}{c}{R Precision $\uparrow$} & \multirow{2}{*}{FID $\downarrow$} & \multirow{2}{*}{MM Dist $\downarrow$} & \multirow{2}{*}{Diversity $\rightarrow$} \\
              \cmidrule(lr){2-4}& {Top 1} & {Top 2} & {Top 3}  & & & \\
              \midrule
              Real & 0.511 & 0.703 & 0.797 & 0.002 & 2.974 & 9.503 \\
              \midrule
              {MLD \cite{chen2023executing}} & 0.504 & 0.698 & 0.796 & 0.450 & 3.052 & 9.634 \\
              \rowcolor{gray!20} {w/ EasyTune} & 
              0.568$_{+12.7\%}$ & 
              0.754$_{+8.0\%}$ & 
              0.846$_{+6.3\%}$ & 
              0.194$_{+56.9\%}$ & 
              2.672$_{+12.5\%}$ & 
              9.368$_{-0.00}$ \\
              \rowcolor{gray!20} {w/ MotionRFT} & 
              0.577$_{+14.5\%}$ & 
              0.767$_{+9.9\%}$ & 
              0.854$_{+7.3\%}$ & 
              0.137$_{+69.6\%}$ & 
              2.622$_{+14.1\%}$ & 
              9.666$_{-0.03}$ \\
              \midrule
              {MLD++ \cite{motionlcm-v2}} & 0.548 & 0.738 & 0.829 & 0.073 & 2.810 & 9.658 \\
              \rowcolor{gray!20} {{w/ EasyTune}} & 
              {0.581$_{+6.0\%}$} & 
              {0.762$_{+3.3\%}$} & 
              {0.849$_{+2.4\%}$} & 
              {0.073$_{+0.0\%}$} & 
              {2.603$_{+7.4\%}$} & 
              {9.719$_{-0.06}$} \\
              \rowcolor{gray!20} {w/ MotionRFT} & 
              \textbf{0.592}$_{+8.0\%}$ & 
              0.778$_{+5.4\%}$ & 
              0.861$_{+3.9\%}$ & 
              \textbf{0.064}$_{+12.3\%}$ & 
              \textbf{2.598}$_{+7.5\%}$ & 
              9.461$_{+0.11}$ \\
              \midrule
              MLCM$^{1S}$ \cite{Dai2025} & 0.502 & 0.701 & 0.803 & 0.467 & 3.052 & 9.631 \\
              \rowcolor{gray!20} {w/ EasyTune} & 
              0.571$_{+13.7\%}$ & 
              0.766$_{+9.3\%}$ & 
              0.854$_{+6.4\%}$ & 
              0.188$_{+59.7\%}$ & 
              2.647$_{+13.3\%}$ & 
              9.692$_{-0.06}$ \\
              \rowcolor{gray!20} {w/ MotionRFT} & 
              0.583$_{+16.1\%}$ & 
              \textbf{0.779}$_{+11.1\%}$ & 
              \textbf{0.868}$_{+8.1\%}$ & 
              0.149$_{+68.1\%}$ & 
              2.627$_{+13.9\%}$ & 
              9.629$_{+0.00}$ \\
              \midrule
              {MLCM$^{4S}$} \cite{Dai2025} & 0.502 & 0.698 & 0.798 & 0.304 & 3.012 & 9.607 \\
              \rowcolor{gray!20} {w/ EasyTune} & 
              0.565$_{+12.5\%}$ & 
              0.760$_{+8.8\%}$ & 
              0.848$_{+6.3\%}$ & 
              0.200$_{+34.2\%}$ & 
              2.691$_{+10.7\%}$ & 
              9.812$_{-0.21}$ \\
              \rowcolor{gray!20} {w/ MotionRFT} & 
              0.578$_{+15.1\%}$ & 
              0.772$_{+10.6\%}$ & 
              0.861$_{+7.9\%}$ & 
              0.170$_{+44.1\%}$ & 
              2.647$_{+12.1\%}$ & 
              \textbf{9.473}$_{+0.07}$ \\
              \midrule
              {MDM$^{50S}$} \cite{tevet2023human} & 0.455 & 0.645 & 0.749 & 0.489 & 3.330 & 9.920 \\
              \rowcolor{gray!20} {w/ EasyTune} & 
              0.472$_{+3.7\%}$ & 
              0.679$_{+5.3\%}$ & 
              0.787$_{+5.1\%}$ & 
              0.411$_{+16.0\%}$ & 
              3.117$_{+6.4\%}$ & 
              9.239$_{+0.15}$ \\
              \rowcolor{gray!20} {w/ MotionRFT} & 
              0.480$_{+5.5\%}$ & 
              0.684$_{+6.0\%}$ & 
              0.801$_{+6.9\%}$ & 
              0.358$_{+26.8\%}$ & 
              3.067$_{+7.9\%}$ & 
              9.368$_{+0.28}$ \\
              \midrule
              {Mo.Diffuse} \cite{zhang2022motiondiffuse} & 0.491 & 0.681 & 0.775 & 0.630 & 3.113 & 9.410 \\
              \rowcolor{gray!20} {w/ EasyTune} & 
              0.488$_{-0.6\%}$ & 
              0.686$_{+0.7\%}$ & 
              0.788$_{+1.7\%}$ & 
              0.556$_{+11.7\%}$ & 
              3.068$_{+1.4\%}$ & 
              9.215$_{-0.20}$ \\
              \rowcolor{gray!20} {w/ MotionRFT} & 
              0.496$_{+1.0\%}$ & 
              0.693$_{+1.8\%}$ & 
              0.795$_{+2.6\%}$ & 
              0.498$_{+21.0\%}$ & 
              2.997$_{+3.7\%}$ & 
              9.404$_{-0.01}$ \\
              \bottomrule
          \end{tabular}}
          \label{tab:PlugAndPlay_H3D}
  \end{table}

\noindent \textbf{Efficiency of  RFT Methods.} As shown in Fig.~\ref{fig:overhead}, EasyTune~\cite{tan2026easytune} takes only 1.47\,s per optimization step, about $3\!\times$ to $4\!\times$ faster than baselines, and is the only method that reaches reward scores of 0.80 and 0.85. All compared methods fail to surpass 0.80, confirming that trajectory-level updates~\cite{clark2024directly} struggle to provide sufficient gradient signal for continued improvement. By contrast, EasyTune optimizes rewards at each denoising step, yielding finer gradients that consistently learning well beyond the ceiling of existing approaches.

\noindent \textbf{Generalization across Different Pre-trained Models.} To evaluate the generalization of EasyTune~\cite{tan2026easytune} across pre-trained text-to-motion models, we applied it to MLD~\cite{chen2023executing}, MLD++~\cite{motionlcm-v2}, MLCM$^{1S}$~\cite{Dai2025}, and MDM$^{50S}$~\cite{tevet2023human}. Note that for ODE-based samplers, we adopt the one-step prediction reward (Eq.~\eqref{eq:reward}, ODE-based), which differs from the noise-aware reward used in Tab.~\ref{tab:addlabel} and Tab.~\ref{tab:sota_humanmld3d}. As shown in Tab.~\ref{tab:PlugAndPlay_H3D}, EasyTune consistently improved performance. For instance, MLD saw a 12.7\% increase in R-Precision at Top 1 to 0.568 and a 56.9\% reduction in FID to 0.194. MLD++ achieved a 6.0\% gain in R-Precision at Top 1 to 0.581 and a 7.4\% improvement in MM Dist to 2.603. MLCM$^{1S}$ and MDM$^{50S}$ also showed significant FID reductions of 59.7\% and 16.0\%, respectively. These results highlight the generalization of EasyTune across various diffusion-based architectures.

\begin{table*}[tbp]
    
    \centering
    \caption{\textbf{Comparison of fine-tuning methods on HumanML3D.}}
    
    \footnotesize
    \setlength{\tabcolsep}{4pt}
    \begin{tabularx}{\textwidth}{@{} l *{7}{>{\centering\arraybackslash}X} @{}}
        \toprule
        \multirow{2}{*}{Method} & \multicolumn{3}{c}{R Precision $\uparrow$} & \multirow{2}{*}{FID $\downarrow$} & \multirow{2}{*}{MM Dist $\downarrow$} & \multirow{2}{*}{Diversity $\rightarrow$} & \multirow{2}{*}{Memory (GB) $\downarrow$} \\
        \cmidrule(lr){2-4}
        ~ & Top 1 & Top 2 & Top 3 & & & & \\
        \midrule
        Real & 0.511 & 0.703 & 0.797 & 0.002 & 2.974 & 9.503 & - \\
        MLD~\cite{chen2023executing} (Base Model) & 0.504 & 0.698 & 0.796 & 0.450 & 3.052 & 9.634 & 15.21 \\
        \midrule
         w/ ReFL-10  \cite{clark2024directly}  & 0.533$_\text{{+5.8\%}}$ & 0.720$_\text{{+3.2\%}}$ & 0.821$_\text{{+3.1\%}}$ & 0.207$_\text{{+54.0\%}}$ & 2.852$_\text{{+6.6\%}}$  & 10.129$_\text{-0.495}$     & \textbf{22.10$_\text{{+6.89}}$}      \\
          w/ ReFL-20  \cite{clark2024directly}  & 0.528$_\text{{+4.8\%}}$ & 0.718$_\text{{+2.9\%}}$ & 0.813$_\text{{+2.1\%}}$ & 0.241$_\text{{+46.4\%}}$ & 2.883$_\text{{+5.5\%}}$  & 10.189$_\text{-0.555}$     & \textbf{22.10$_\text{{+6.89}}$}       \\
        w/ DRaFT-10 \cite{clark2024directly} & {0.565}$_\text{{+12.1\%}}$ & 0.757$_\text{{+8.5\%}}$ & {0.846}$_\text{{+6.3\%}}$ & 0.195$_\text{{+56.7\%}}$ & {2.703}$_\text{{+11.4\%}}$ & 9.851$_\text{{-0.217}}$ & 26.56$_\text{{+11.35}}$ \\
        w/ DRaFT-50 \cite{clark2024directly} & 0.528$_\text{{+4.8\%}}$ & 0.724$_\text{{+3.7\%}}$ & 0.819$_\text{{+2.9\%}}$ & 0.197$_\text{{+56.2\%}}$ & 2.872$_\text{{+5.9\%}}$ & \underline{9.641}$_\text{{-0.007}}$ & 37.32$_\text{{+22.11}}$ \\
        w/ AlignProp \cite{prabhudesai2023aligning} & 0.560$_\text{{+11.1\%}}$ & {0.753}$_\text{{+7.9\%}}$ & 0.841$_\text{{+5.7\%}}$ & 0.266$_\text{{+40.9\%}}$ & 2.739$_\text{{+10.3\%}}$ & 9.877$_\text{{-0.243}}$ & 30.40$_\text{{+15.19}}$ \\
        w/ DRTune \cite{wu2025drtune} & 0.549$_\text{{+8.9\%}}$ & 0.746$_\text{{+6.9\%}}$ & 0.836$_\text{{+5.0\%}}$ & 0.313$_\text{{+30.4\%}}$ & 2.795$_\text{{+8.4\%}}$ & 9.930$_\text{{-0.296}}$ & {27.01}$_\text{{+11.80}}$ \\
        \rowcolor{gray!20} w/ EasyTune (Ours,  Step Optimization) & \textbf{0.581$_\text{{+15.3\%}}$} & \textbf{0.769$_\text{{+10.2\%}}$} & \textbf{0.855$_\text{{+7.4\%}}$} & \textbf{0.132$_\text{{+70.7\%}}$} & \underline{2.637$_\text{{+13.6\%}}$} & \textbf{9.465$_\text{{+0.093}}$} & \textbf{22.10$_\text{{+6.89}}$} \\
        \rowcolor{gray!20} w/ EasyTune (Ours, Chain Optimization) & \underline{0.574$_\text{{+13.9\%}}$} & \underline{0.766$_\text{{+9.7\%}}$} & \underline{0.854$_\text{{+7.3\%}}$} & \underline{0.172$_\text{{+61.8\%}}$} & \textbf{2.614$_\text{{+14.4\%}}$} & {9.348$_\text{{-0.024}}$} & \underline{{24.21$_\text{{+9.00}}$}} \\
        \bottomrule
    \end{tabularx}
    \label{tab:addlabel}%
\end{table*}%

\begin{figure*}[t]
	\centering
	\includegraphics[width=\textwidth]{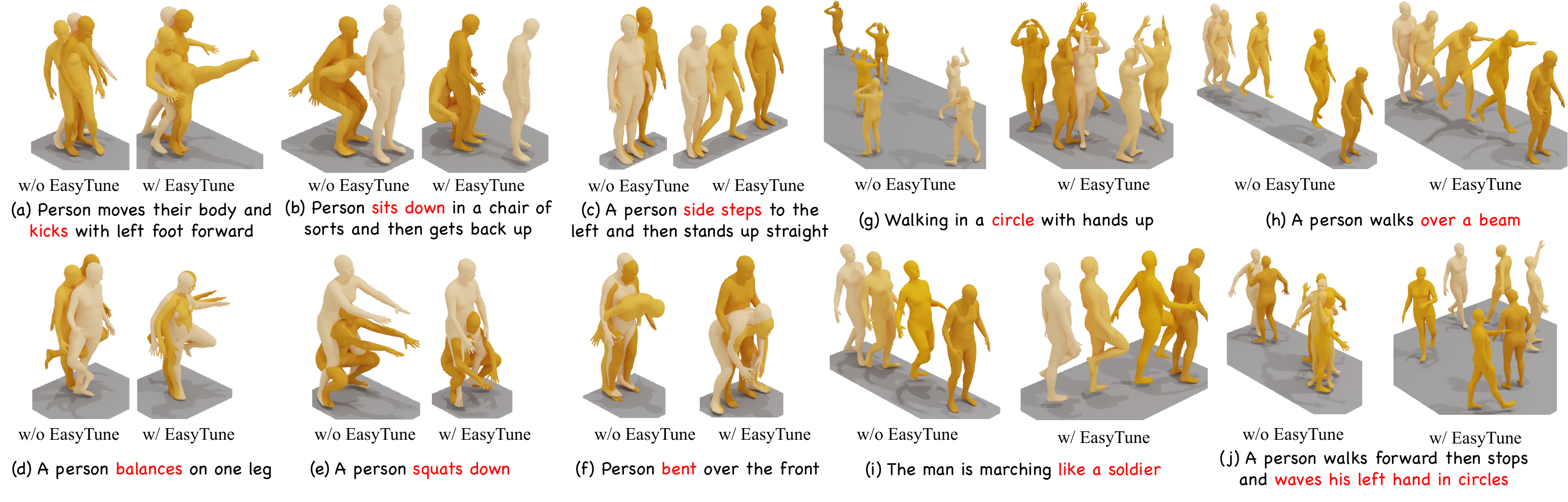} 
	\caption{Visual results on HumanML3D dataset. ``w/o EasyTune'' refers to motions generated by the original MLD model~\cite{chen2023executing}, while ``w/ EasyTune'' indicates motions generated by the MLD model fine-tuned using our proposed EasyTune~\cite{tan2026easytune}.}
	\label{fig:vis}
\end{figure*}

\begin{figure}[tbp]
\centering
\begin{minipage}[t]{0.49\linewidth}
    \centering
    \includegraphics[width=\linewidth]{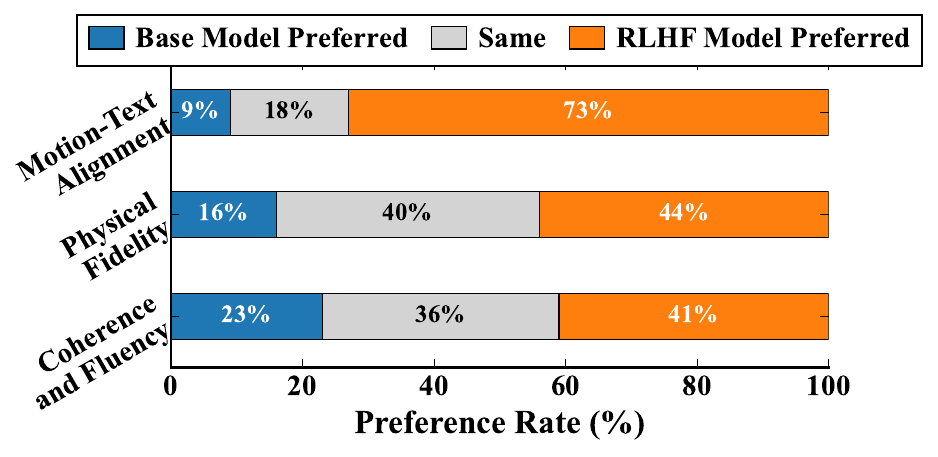}
    \caption{User study results.}
    \label{fig:study}
\end{minipage}
\begin{minipage}[t]{0.49\linewidth}
    \centering
    \includegraphics[width=\linewidth]{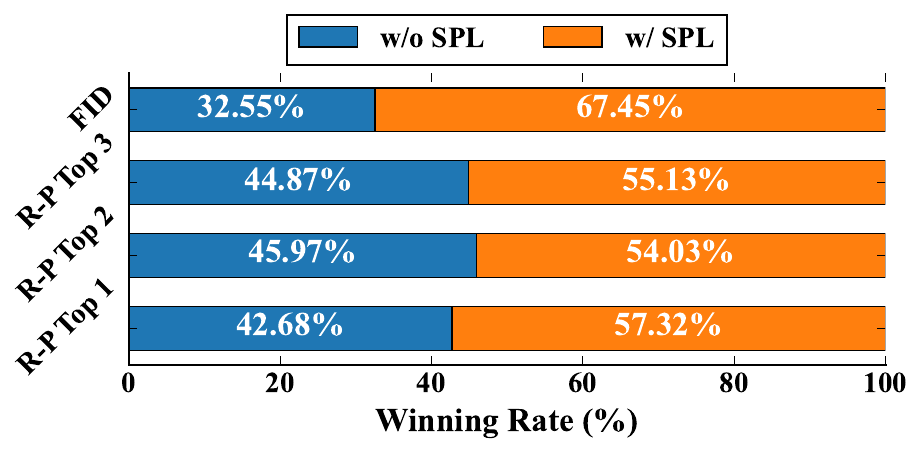}
    \caption{Ablation on SPL for generation.}
    \label{fig:pair}
\end{minipage}
\end{figure}

\noindent \textbf{Comparison with SoTA Text-to-Motion Methods.} We evaluate EasyTune~\cite{tan2026easytune} on text-to-motion generation using MLD~\cite{chen2023executing} and MLD++~\cite{motionlcm-v2} as base models, comparing with state-of-the-art methods on the HumanML3D~\cite{guo2022generating} and KIT-ML~\cite{plappert2016kit} datasets, as shown in Tab.~\ref{tab:sota_humanmld3d}. On HumanML3D, EasyTune improves the R-Precision at Top 1 of MLD from 0.504 to 0.581 and MLD++ from 0.548 to 0.591, surpassing baselines such as ParCo~\cite{zou2024parco} at 0.515 and ReMoDiffuse~\cite{zhang2023remodiffuse} at 0.510. It also achieves the best MM Dist of 2.637 and 2.592, along with competitive FID of 0.132 and 0.069. 

\begin{figure}[t]
\centering
\definecolor{cReAlign}{HTML}{1F77B4}
\definecolor{cSPL}{HTML}{D62728}
\resizebox{\columnwidth}{!}{%
\begin{tikzpicture}
\begin{axis}[
    name=plotH3D,
    ybar=1.5pt,
    bar width=9pt,
    width=0.55\columnwidth,
    height=3.4cm,
    xlabel={\fontsize{7}{8}\selectfont\fontfamily{ptm}\selectfont (a) HumanML3D},
    xlabel style={yshift=2pt},
    ylabel={Retrieval Accuracy (\%)},
    ylabel style={font=\fontsize{7}{8}\selectfont\fontfamily{ptm}\selectfont},
    symbolic x coords={R@3, R@5, R@10},
    xtick=data,
    xticklabel style={font=\fontsize{6.5}{7.5}\selectfont\fontfamily{ptm}\selectfont},
    ymin=84, ymax=100,
    ytick={84,86,88,90,92,94,96,98,100},
    yticklabel style={font=\fontsize{6}{7}\selectfont\fontfamily{ptm}\selectfont},
    legend style={
        at={(1.05,1.02)}, anchor=south,
        legend columns=2,
        font=\fontsize{6.5}{7.5}\selectfont\fontfamily{ptm}\selectfont,
        draw=none,
        fill=none,
        column sep=6pt,
    },
    ymajorgrids=true,
    xmajorgrids=false,
    grid style={gray!50, dashed},
    axis x line*=bottom,
    axis y line*=left,
    axis line style={black!70},
    tick style={black!70},
    enlarge x limits=0.3,
    clip=false,
    every node near coord/.append style={
        font=\fontsize{5}{6}\selectfont\fontfamily{ptm}\selectfont,
        anchor=south,
    },
    nodes near coords,
    nodes near coords align={vertical},
    point meta=explicit symbolic,
]
\addplot[fill=cReAlign!60, draw=cReAlign!90] coordinates {
    (R@3, 87.44) [87.44]
    (R@5, 91.97) [91.97]
    (R@10, 96.28) [96.28]
};
\addplot[fill=cSPL!60, draw=cSPL!90] coordinates {
    (R@3, 88.66) [88.66]
    (R@5, 92.81) [92.81]
    (R@10, 96.95) [96.75]
};
\legend{ReAlign~(Baseline), \textbf{w/ SPL (Ours)}}
\end{axis}
\begin{axis}[
    at={(plotH3D.east)},
    anchor=west,
    xshift=8pt,
    ybar=1.5pt,
    bar width=9pt,
    width=0.55\columnwidth,
    height=3.4cm,
    xlabel={\fontsize{7}{8}\selectfont\fontfamily{ptm}\selectfont (b) KIT-ML},
    xlabel style={yshift=2pt},
    symbolic x coords={R@3, R@5, R@10},
    xtick=data,
    xticklabel style={font=\fontsize{6.5}{7.5}\selectfont\fontfamily{ptm}\selectfont},
    ymin=80, ymax=100,
    ytick={80,82,84,86,88,90,92,94,96,98,100},
    yticklabel style={font=\fontsize{6}{7}\selectfont\fontfamily{ptm}\selectfont},
    ymajorgrids=true,
    xmajorgrids=false,
    grid style={gray!50, dashed},
    axis x line*=bottom,
    axis y line*=left,
    axis line style={black!70},
    tick style={black!70},
    enlarge x limits=0.3,
    clip=false,
    every node near coord/.append style={
        font=\fontsize{5}{6}\selectfont\fontfamily{ptm}\selectfont,
        anchor=south,
    },
    nodes near coords,
    nodes near coords align={vertical},
    point meta=explicit symbolic,
]
\addplot[fill=cReAlign!60, draw=cReAlign!90] coordinates {
    (R@3, 82.96) [82.96]
    (R@5, 91.19) [91.19]
    (R@10, 97.29) [97.59]
};
\addplot[fill=cSPL!60, draw=cSPL!90] coordinates {
    (R@3, 84.52) [84.52]
    (R@5, 93.18) [93.18]
    (R@10, 97.99) [97.73]
};
\end{axis}
\end{tikzpicture}%
}
\caption{Text-Motion retrieval accuracy of ReAlign and SPL.}
\label{fig:spl_improvement}
\end{figure}
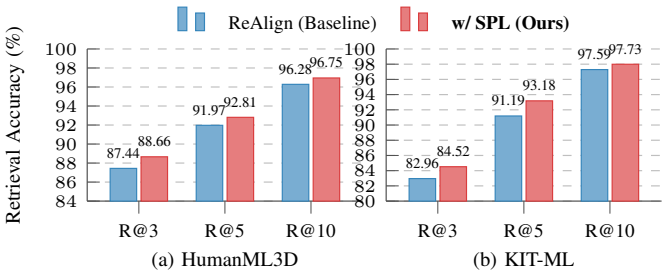

\noindent \textbf{Effectiveness of RFT using Motion Reward.}
We compare MotionRFT, which combines MotionReward with Curriculum Timestep Scheduling, against EasyTune~\cite{tan2026easytune} that uses the single-task ReAlign~\cite{weng2025} reward. As shown in Tab.~\ref{tab:PlugAndPlay_H3D}, MotionRFT consistently outperforms EasyTune on all six kinematic-based models. On MLD, MotionRFT reduces FID from 0.194 to 0.137; on MLD++, it further achieves an FID of 0.064 and MM Dist of 2.598. In Tab.~\ref{tab:sota_humanmld3d}, MotionRFT with MLD++ obtains the best R-Precision Top\,1 of 0.603 and FID of 0.052 on kinematic features; with HY Motion on rotation features, it reaches a Top\,1 of 0.634 and FID of 0.056; on joint features, it improves ACMDM FID from 0.109 to 0.084. Fig.~\ref{fig:training_curve} further shows that the preference reward steadily improves while the authenticity reward remains stable, confirming that MotionRFT effectively leverages multi-dimensional rewards without degradation.

\noindent \textbf{Cross-Representation Generalization.}
A key advantage of our framework is its extensibility across heterogeneous motion representations. As shown in Tab.~\ref{tab:retrieval}, a single MotionReward model achieves strong text-motion retrieval accuracy on kinematic, joint, and rotation representations simultaneously, outperforming representation-specific baselines by over 8\% on joint and rotation features. This unified reward directly enables cross-representation fine-tuning without retraining the reward model. Tab.~\ref{tab:sota_humanmld3d} validates this extensibility on the generation side: MotionRFT improves kinematic-based models such as MLD and MLD++, joint-based ACMDM, and rotation-based HY Motion and HY Motion-lite, all using the same MotionReward. On HY Motion, MotionRFT achieves 12.6\% and 23.3\% gains in R-Precision Top\,1 and FID respectively; on ACMDM, it yields a 22.9\% FID reduction. These results confirm that our unified reward framework generalizes across kinematic, joint, and rotation representations, providing consistent supervision without representation-specific adaptation.

\subsection{Ablation Study, User Study \& Visualization}

To assess whether our fine-tuned model really improve the generation quality, we conducted a user study on the first 100 prompts of the HumanML3D test set, where each generated motion was independently evaluated by five participants across text-motion alignment, motion fidelity, and temporal coherence.  Fig.~\ref{fig:vis} visualizes representative examples: the fine-tuned model generates motions with substantially better semantic understanding. For instance, EasyTune correctly produces a marching motion for ``The man is marching like a soldier,'' whereas the original model fails to capture this nuanced behavior. As shown in Fig.~\ref{fig:study}, our method consistently outperforms the baseline MLD model across all criteria, confirming that the quality improvements are genuine rather than artifacts of reward exploitation.

\noindent \textbf{Effectiveness of Self-refinement Preference Learning.} To evaluate the effect of SPL, we compare text-to-motion retrieval accuracy with and without SPL on HumanML3D and KIT-ML, as shown in Fig.~\ref{fig:pair} and \ref{fig:spl_improvement}. SPL consistently improves over the ReAlign baseline across all metrics and both datasets. On KIT-ML, SPL improves R@3 from 82.96\% to 84.52\% and R@5 from 91.19\% to 93.18\%. On HumanML3D, R@3 improves from 87.44\% to 88.66\%. These results confirm that SPL enhances preference modeling without additional human annotations, yielding stronger reward signals for improved semantic alignment. 

\section{Conclusion}
\label{sec:conclusion}

In this paper, we present MotionRFT, a unified reinforcement fine-tuning framework for text-to-motion generation, comprising MotionReward and EasyTune. MotionReward establishes a unified semantic space across heterogeneous motion representations and trains multi-dimensional rewards for semantic alignment, human preference, and motion authenticity. We further propose Self-refinement Preference Learning to dynamically construct preference pairs without additional annotations. EasyTune addresses the recursive gradient dependence in differentiable-reward optimization by performing step-aware per-step updates, enabling dense and memory-efficient fine-tuning. Extensive experiments validate strong cross-model and cross-representation generalization, achieving FID 0.132 with 22.10 GB peak memory, saving up to 15.22 GB over prior baselines, reducing FID by 22.9\% on joint-based ACMDM, and improving rotation-based HY Motion by 12.6\% in R-Precision and 23.3\% in FID.

\noindent \textbf{Limitations and Future Work.} MotionReward remains fundamentally data driven. Although it accounts for data realism, it cannot fully resolve physical plausibility for real-world humanoid robotics. Future work will extend the framework to physics simulation for better enforce dynamics and contact, and extending MotionRFT to richer conditional generation, like video-driven motion generation.

\appendices

    \section{Proof} \label{supp:the}
    \subsection{Proof of {Corollary} 1} \label{supp:the1}
    Recall the {Corollary} 1.
    \begin{corollary*}
        Given the reverse process in Eq.~\eqref{eq:reverse_process}, $\mathbf{x}_{t-1}^\theta = \pi_\theta(\mathbf{x}_t^\theta, t, c)$, the gradient w.r.t diffusion model $\theta$, denoted as $\tfrac{\partial \mathbf{x}^\theta_{t-1}}{\partial \theta}$, can be expressed as:
        \begin{equation}\small
            {\frac{\partial \mathbf{x}^\theta_{t-1}}{\partial \theta}} =  \underbrace{\frac{\partial \pi_{{\theta}}(\mathbf{x}^{\theta}_t, t, c)}{\partial {\theta}}}_{\textnormal{direct term}} + 
            \underbrace{\frac{\partial \pi_\theta(\mathbf{x}^{{\theta}}_t, t, c)}{\partial \mathbf{x}^{{\theta}}_t} \cdot {\frac{\partial \mathbf{x}^{{\theta}}_t}{\partial {\theta}}}}_{\textnormal{indirect term}}.
            \label{supp:eq:recursive_gradient}
        \end{equation}
    \end{corollary*}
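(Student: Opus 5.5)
The plan is to read the reverse update in Eq.~\eqref{eq:reverse_process} as a composition of two functions of the parameters, and then apply the multivariable chain rule. Define the map
\[
F(\theta_1,\mathbf{y}) := \pi_{\theta_1}(\mathbf{y}, t, c) = \frac{1}{\sqrt{\alpha_t}}\Bigl(\mathbf{y} - \frac{\beta_t}{\sqrt{1-\bar{\alpha}_t}}\,\epsilon_{\theta_1}(\mathbf{y}, t, c)\Bigr).
\]
Here $t$, $c$, $\alpha_t$, $\beta_t$ and $\bar{\alpha}_t$ are held fixed and do not depend on $\theta$. With this notation,
\[
\mathbf{x}_{t-1}^\theta = F\bigl(\theta, \mathbf{x}_t^\theta\bigr),
\]
and $\mathbf{x}_t^\theta$ is itself a function of $\theta$, because it is produced by the earlier reverse steps starting from the $\theta$-independent noise $\mathbf{x}_T$. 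The parameter therefore enters $\mathbf{x}_{t-1}^\theta$ through two channels: explicitly through the weights of $\epsilon_\theta$, and implicitly through the input state $\mathbf{x}_t^\theta$.

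\textbf{Step 1 (regularity).} I will assume $\epsilon_\theta$ is differentiable in both its parameters and its input, which holds for standard neural networks with smooth activations, or almost everywhere otherwise. Then $F$ is differentiable jointly in $(\theta_1,\mathbf{y})$. By induction on the number of reverse steps from $T$ down to $t$, the map $\theta \mapsto \mathbf{x}_t^\theta$ is also differentiable. The base case is $\mathbf{x}_T$, which is constant in $\theta$, so $\partial \mathbf{x}_T/\partial\theta = 0$.

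\textbf{Step 2 (chain rule).} Consider the composite map $\theta \mapsto (\theta, \mathbf{x}_t^\theta) \mapsto F$. Its total Jacobian is
\[
\frac{\partial \mathbf{x}_{t-1}^\theta}{\partial\theta} = \partial_{\theta_1}F\big|_{(\theta,\mathbf{x}_t^\theta)} \cdot I + \partial_{\mathbf{y}}F\big|_{(\theta,\mathbf{x}_t^\theta)} \cdot \frac{\partial \mathbf{x}_t^\theta}{\partial\theta}.
\]
I then identify the first summand as the direct term $\partial \pi_\theta(\mathbf{x}_t^\theta,t,c)/\partial\theta$, understood as a partial derivative with the input held fixed. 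The second summand is the indirect term. For concreteness, both partials can be written out explicitly:
\[
\partial_{\theta_1}F = -\frac{\beta_t}{\sqrt{\alpha_t}\sqrt{1-\bar\alpha_t}}\,\partial_\theta\epsilon_\theta,
\qquad
\partial_{\mathbf{y}}F = \frac{1}{\sqrt{\alpha_t}}\Bigl(I - \frac{\beta_t}{\sqrt{1-\bar\alpha_t}}\,\partial_{\mathbf{x}}\epsilon_\theta\Bigr).
\]

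\textbf{Main obstacle.} The only delicate point is notational rather than mathematical. The statement uses the same symbol $\partial\pi_\theta/\partial\theta$ for the partial derivative (input frozen), while the left-hand side is a total derivative. I will state this convention explicitly to avoid circularity.

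Finally, I will note that unrolling this relation from $t=T$ down to $1$, using $\partial\mathbf{x}_T/\partial\theta=0$, yields the sum-of-products formula in Eq.~\eqref{eq:final_gradient}.
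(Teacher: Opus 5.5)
Your proposal is correct and follows essentially the same route as the paper. Both treat $\mathbf{x}_{t-1}^\theta$ as a two-argument map evaluated at $(\theta,\mathbf{x}_t^\theta)$ and apply the multivariate chain rule with $\partial\theta/\partial\theta = I$, which splits the total derivative into the direct and indirect terms; your regularity induction, closed-form partials, and explicit partial-versus-total derivative convention are useful additions that the paper leaves implicit.
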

    
    \begin{proof}
        Let $u(\theta) = \mathbf{x}_{t}^\theta$, $v(\theta) = \theta$, and define $\tilde{F}(\theta) = F\bigl(u(\theta),\, v(\theta)\bigr) = \pi_{v(\theta)}\bigl(u(\theta), t, c\bigr)$. By the multivariate chain rule, the total derivative of $\tilde{F}$ w.r.t.\ $\theta$ is:
        \begin{equation}\small
            \frac{\partial \tilde{F}(\theta)}{\partial \theta} = \frac{\partial F}{\partial v}\cdot \frac{\partial v}{\partial \theta} + \frac{\partial F}{\partial u} \cdot \frac{\partial u}{\partial \theta}.
        \end{equation}
         The first term $\frac{\partial v}{\partial \theta}$ can be expressed as:
        \begin{equation}\small
            \frac{\partial v}{\partial \theta} = \frac{\partial \theta}{\partial \theta} = I,
        \end{equation}
        and the second term $\frac{\partial u}{\partial \theta}$ can be expressed as:
         \begin{equation}\small
            \frac{\partial u}{\partial \theta} = \frac{\partial \mathbf{x}_{t}^\theta}{\partial \theta}.
         \end{equation}
    
        Hence, we can rewrite the equation as:
        \begin{equation}\small
            \frac{\partial \tilde{F}(\theta)}{\partial \theta} = \frac{\partial F}{\partial v} + \frac{\partial F}{\partial u} \cdot \frac{\partial \mathbf{x}_{t}^\theta}{\partial \theta}.
        \end{equation}
        Furthermore, we substitute $F(u,v)$ with $\pi_{\theta}(\mathbf{x}_{t}^\theta, t, c)$, and thus the relationship described in Eq.~\eqref{supp:eq:recursive_gradient} holds:
        \begin{equation*}\small
            {\frac{\partial \mathbf{x}^\theta_{t-1}}{\partial \theta}} = \frac{\partial \pi_{{\theta}}(\mathbf{x}^{{\theta}}_t, t, c)}{\partial {\theta}} = \underbrace{\frac{\partial \pi_{{\theta}}(\mathbf{x}^{\theta}_t, t, c)}{\partial {\theta}}}_{\textnormal{direct term}} + 
            \underbrace{\frac{\partial \pi_\theta(\mathbf{x}^{{\theta}}_t, t, c)}{\partial \mathbf{x}^{{\theta}}_t} \cdot {\frac{\partial \mathbf{x}^{{\theta}}_t}{\partial {\theta}}}}_{\textnormal{indirect term}}.
        \end{equation*}
    
        The proof is completed.
    \end{proof}

    \subsection{Proof of Eq. (5)}\label{supp:eq5}
    \begin{proof}
        Given a diffusion model $\epsilon_\theta$, and a reward model $\mathcal{R}_\phi$, the diffusion model is fine-tuned by maximizing the differentiable reward value:
        \begin{equation}\small
            \begin{aligned}
                \frac{\partial \mathcal{L}(\theta)}{\partial \theta} = -\mathbb{E}_{\cdot} \left[ \frac{\partial \mathcal{R}_\phi(\mathbf{x}_0^\theta, c)}{\partial \mathbf{x}_0^\theta} \cdot \frac{\partial \mathbf{x}_0^\theta}{\partial \theta} \right].
            \end{aligned}
        \end{equation}
        where $\mathbb{E}_{\cdot}$ denotes $\mathbb{E}_{c \sim \mathbb{D}_\mathrm{T}, \mathbf{x}_0^\theta \sim \pi_\theta(\cdot|c)}$, and $\pi_\theta$ is the reverse process defined in Eq. \eqref{eq:reverse_process}.

        Here, we introduce Corollary \ref{thm:t1} to compute $\frac{\partial \mathbf{x}_0^\theta}{\partial \theta}$, and thus we have:
        \begin{equation*}\small
            \begin{aligned}
                \frac{\partial \mathcal{L}(\theta)}{\partial \theta} &= - \mathbb{E}_{\cdot}\, \frac{\partial \mathcal{R}_\phi (\mathbf{x}^\theta_0, c)}{\partial \mathbf{x}^\theta_0} \cdot \underset{\text{expand}}{\frac{\partial \mathbf{x}^\theta_0}{\partial \theta}} \\
                &= - \mathbb{E}_{\cdot}\, \frac{\partial \mathcal{R}_\phi (\mathbf{x}^\theta_0, c)}{\partial \mathbf{x}^\theta_0} \cdot \left( \frac{\partial \pi_\theta (\mathbf{x}^\theta_1)}{\partial \theta} + \frac{\partial \pi_\theta(\mathbf{x}^\theta_1)}{\partial \mathbf{x}^\theta_1} \cdot \underset{\text{expand}}{\frac{\partial \mathbf{x}^\theta_1}{\partial \theta}} \right)\\
                &= - \mathbb{E}_{\cdot}\, \frac{\partial \mathcal{R}_\phi (\mathbf{x}^\theta_0, c)}{\partial \mathbf{x}^\theta_0} \cdot \bigg( \frac{\partial \pi_\theta (\mathbf{x}^\theta_1)}{\partial \theta} + \frac{\partial \pi_\theta(\mathbf{x}^\theta_1)}{\partial \mathbf{x}^\theta_1} \cdot \frac{\partial \pi_\theta(\mathbf{x}_2^\theta)}{\partial \theta} \\
                &\quad + \frac{\partial \pi_\theta(\mathbf{x}^\theta_1)}{\partial \mathbf{x}^\theta_1} \cdot \frac{\partial \pi_\theta(\mathbf{x}^\theta_2)}{\partial \mathbf{x}^\theta_2} \cdot \underset{\text{expand}}{\frac{\partial \mathbf{x}^\theta_2}{\partial \theta}} \bigg)\\
                &= \cdots\\
                &= - \mathbb{E}_{\cdot}\, \frac{\partial \mathcal{R}_\phi (\mathbf{x}^\theta_0, c)}{\partial \mathbf{x}^\theta_0} \cdot \left( \sum_{t=1}^{T} \left( \prod_{s=1}^{t-1} \frac{\partial \pi_\theta (\mathbf{x}^\theta_s)}{\partial \mathbf{x^\theta_s}} \right) \cdot \frac{\partial \pi_\theta (\mathbf{x}_t^\theta)}{\partial \theta} \right).
            \end{aligned}
        \end{equation*}
        The proof is completed.
    \end{proof}

\ifCLASSOPTIONcaptionsoff
  \newpage
\fi

\bibliographystyle{IEEEtran}
\bibliography{merged_refs}

@inproceedings{azadi2023make,
  title={Make-an-animation: Large-scale text-conditional 3d human motion generation},
  author={Azadi, Samaneh and Shah, Akbar and Hayes, Thomas and Parikh, Devi and Gupta, Sonal},
  booktitle={Proceedings of the IEEE/CVF International Conference on Computer Vision},
  pages={15039--15048},
  year={2023}
}

@article{tashakori2025flexmotion,
  title={FlexMotion: Lightweight, Physics-Aware, and Controllable Human Motion Generation},
  author={Tashakori, Arvin and Tashakori, Arash and Yang, Gongbo and Wang, Z Jane and Servati, Peyman},
  journal={arXiv preprint arXiv:2501.16778},
  year={2025}
}

@article{ho2020denoising,
  title={Denoising diffusion probabilistic models},
  author={Ho, Jonathan and Jain, Ajay and Abbeel, Pieter},
  journal={Advances in neural information processing systems},
  volume={33},
  pages={6840--6851},
  year={2020}
}

@inproceedings{chen2023executing,
  title={Executing your commands via motion diffusion in latent space},
  author={Chen, Xin and Jiang, Biao and Liu, Wen and Huang, Zilong and Fu, Bin and Chen, Tao and Yu, Gang},
  booktitle={Proceedings of the IEEE/CVF Conference on Computer Vision and Pattern Recognition},
  pages={18000--18010},
  year={2023}
}

@ARTICLE{zhang2022motiondiffuse,
  author={Zhang, Mingyuan and Cai, Zhongang and Pan, Liang and Hong, Fangzhou and Guo, Xinying and Yang, Lei and Liu, Ziwei},
  journal={IEEE Transactions on Pattern Analysis and Machine Intelligence}, 
  title={MotionDiffuse: Text-Driven Human Motion Generation With Diffusion Model}, 
  year={2024},
  volume={46},
  number={6},
  pages={4115-4128},
}

@inproceedings{guo2022generating,
  title={Generating diverse and natural 3d human motions from text},
  author={Guo, Chuan and Zou, Shihao and Zuo, Xinxin and Wang, Sen and Ji, Wei and Li, Xingyu and Cheng, Li},
  booktitle={Proceedings of the IEEE/CVF Conference on Computer Vision and Pattern Recognition},
  pages={5152--5161},
  year={2022}
}

@article{tan2024sopo,
  title={SoPo: Text-to-Motion Generation Using Semi-Online Preference Optimization},
  author={Tan, Xiaofeng and Wang, Hongsong and Geng, Xin and Zhou, Pan},
  journal={Advances in Neural Information Processing Systems},
  year={2025}
}

@article{xu2023imagereward,
  title={Imagereward: Learning and evaluating human preferences for text-to-image generation},
  author={Xu, Jiazheng and Liu, Xiao and Wu, Yuchen and Tong, Yuxuan and Li, Qinkai and Ding, Ming and Tang, Jie and Dong, Yuxiao},
  journal={Advances in Neural Information Processing Systems},
  volume={36},
  pages={15903--15935},
  year={2023}
}

@article{kirstain2023pick,
  title={Pick-a-pic: An open dataset of user preferences for text-to-image generation},
  author={Kirstain, Yuval and Polyak, Adam and Singer, Uriel and Matiana, Shahbuland and Penna, Joe and Levy, Omer},
  journal={Advances in Neural Information Processing Systems},
  volume={36},
  pages={36652--36663},
  year={2023}
}

@inproceedings{clark2024directly,
  title={Directly Fine-Tuning Diffusion Models on Differentiable Rewards},
  author={Kevin Clark and Paul Vicol and Kevin Swersky and David J. Fleet},
  booktitle={The Twelfth International Conference on Learning Representations},
  year={2024},
}

@article{black2023training,
  title={Training diffusion models with reinforcement learning},
  author={Black, Kevin and Janner, Michael and Du, Yilun and Kostrikov, Ilya and Levine, Sergey},
  journal={arXiv preprint arXiv:2305.13301},
  year={2023}
}

@InProceedings{Wallace_2024_CVPR,
  author    = {Wallace, Bram and Dang, Meihua and Rafailov, Rafael and Zhou, Linqi and Lou, Aaron and Purushwalkam, Senthil and Ermon, Stefano and Xiong, Caiming and Joty, Shafiq and Naik, Nikhil},
  title     = {Diffusion Model Alignment Using Direct Preference Optimization},
  booktitle = {Proceedings of the IEEE/CVF Conference on Computer Vision and Pattern Recognition (CVPR)},
  month     = {June},
  year      = {2024},
  pages     = {8228-8238}
}

@inproceedings{NEURIPS2023_fc65fab8,
  author = {Fan, Ying and Watkins, Olivia and Du, Yuqing and Liu, Hao and Ryu, Moonkyung and Boutilier, Craig and Abbeel, Pieter and Ghavamzadeh, Mohammad and Lee, Kangwook and Lee, Kimin},
  booktitle = {Advances in Neural Information Processing Systems},
  editor = {A. Oh and T. Naumann and A. Globerson and K. Saenko and M. Hardt and S. Levine},
  pages = {79858--79885},
  publisher = {Curran Associates, Inc.},
  title = {DPOK: Reinforcement Learning for Fine-tuning Text-to-Image Diffusion Models},
  volume = {36},
  year = {2023}
}

@inproceedings{wu2025drtune,
  title={Deep Reward Supervisions for Tuning Text-to-Image Diffusion Models},
  author={Xiaoshi Wu and Yiming Hao and Manyuan Zhang and Keqiang Sun and Zhaoyang Huang and Guanglu Song and Yu Liu and Hongsheng Li},
  booktitle={Computer Vision and Pattern Recognition (ECCV)},
  year={2025},
  pages={108--124},
  publisher={Springer Nature Switzerland},
  address={Cham},
}

@misc{prabhudesai2023aligning,
  title={Aligning Text-to-Image Diffusion Models with Reward Backpropagation}, 
  author={Mihir Prabhudesai and Anirudh Goyal and Deepak Pathak and Katerina Fragkiadaki},
  year={2023},
  eprint={2310.03739},
  archivePrefix={arXiv},
  primaryClass={cs.CV}
}

@article{motionlcm-v2,
  title={Real-time Controllable Motion Generation via Latent Consistency Model},
  author={Dai, Wenxun and Chen, Ling-Hao and Huo, Yufei and Wang, Jingbo and Liu, Jinpeng and Dai, Bo and Tang, Yansong}
}

@InProceedings{Dai2025,
  author="Dai, Wenxun and Chen, Ling-Hao and Wang, Jingbo and Liu, Jinpeng and Dai, Bo and Tang, Yansong",
  editor="Leonardis, Ale{\v{s}} and Ricci, Elisa and Roth, Stefan and Russakovsky, Olga and Sattler, Torsten and Varol, G{\"u}l",
  title="MotionLCM: Real-Time Controllable Motion Generation via Latent Consistency Model",
  booktitle="European Conference on Computer Vision",
  year="2024",
  publisher="Springer Nature Switzerland",
  address="Cham",
  pages="390--408",
  isbn="978-3-031-72640-8"
}

@inproceedings{tevet2023human,
  title={Human Motion Diffusion Model},
  author={Guy Tevet and Sigal Raab and Brian Gordon and Yoni Shafir and Daniel Cohen-or and Amit Haim Bermano},
  booktitle={The Eleventh International Conference on Learning Representations},
  year={2023}
}

@article{plappert2016kit,
  title={The kit motion-language dataset},
  author={Plappert, Matthias and Mandery, Christian and Asfour, Tamim},
  journal={Big data},
  volume={4},
  number={4},
  pages={236--252},
  year={2016},
  publisher={Mary Ann Liebert, Inc. 140 Huguenot Street, 3rd Floor New Rochelle, NY 10801 USA}
}

@article{guo2023momask,
  title={MoMask: Generative Masked Modeling of 3D Human Motions}, 
  author={Chuan Guo and Yuxuan Mu and Muhammad Gohar Javed and Sen Wang and Li Cheng},
  year={2023},
  eprint={2312.00063},
  archivePrefix={arXiv},
  primaryClass={cs.CV}
}

@inproceedings{meng2024rethinking,
  title={Rethinking Diffusion for Text-Driven Human Motion Generation: Redundant Representations, Evaluation, and Masked Autoregression},
  author={Meng, Zichong and Xie, Yiming and Peng, Xiaogang and Han, Zeyu and Jiang, Huaizu},
  booktitle={Proceedings of the Computer Vision and Pattern Recognition Conference},
  pages={27859--27871},
  year={2025}
}

@inproceedings{yuan2023physdiff,
  title={Physdiff: Physics-guided human motion diffusion model},
  author={Yuan, Ye and Song, Jiaming and Iqbal, Umar and Vahdat, Arash and Kautz, Jan},
  booktitle={Proceedings of the IEEE/CVF international conference on computer vision},
  pages={16010--16021},
  year={2023}
}

@INPROCEEDINGS{HanReinDiffuse2025,
  author = {Han, Gaoge and Liang, Mingjiang and Tang, Jinglei and Cheng, Yongkang and Liu, Wei and Huang, Shaoli},
  booktitle = {2025 IEEE/CVF Winter Conference on Applications of Computer Vision (WACV)},
  title = {{ReinDiffuse: Crafting Physically Plausible Motions with Reinforced Diffusion Model}},
  year = {2025},
  pages = {2218-2227},
  doi = {10.1109/WACV61041.2025.00222},
  publisher = {IEEE Computer Society},
  address = {Los Alamitos, CA, USA},
  month = mar
}

@inproceedings{motioncritic2025,
  title={Aligning Motion Generation with Human Perceptions},
  author={Wang, Haoru and Zhu, Wentao and Miao, Luyi and Xu, Yishu and Gao, Feng and Tian, Qi and Wang, Yizhou},
  booktitle={International Conference on Learning Representations (ICLR)},
  year={2025}
}

@inproceedings{xie2025dymo,
  title={DyMO: Training-Free Diffusion Model Alignment with Dynamic Multi-Objective Scheduling},
  author={Xie, Xin and Gong, Dong},
  booktitle={Proceedings of the Computer Vision and Pattern Recognition Conference},
  pages={13220--13230},
  year={2025}
}

@inproceedings{petrovich2023tmr,
  title={TMR: Text-to-motion retrieval using contrastive 3D human motion synthesis},
  author={Petrovich, Mathis and Black, Michael J and Varol, G{\"u}l},
  booktitle={Proceedings of the IEEE/CVF International Conference on Computer Vision},
  pages={9488--9497},
  year={2023}
}

@article{weng2025,
  title={ReAlign: Text-to-Motion Generation via Step-Aware Reward-Guided Alignment},
  author={Weng, Wanjiang and Tan, Xiaofeng and Wang, Hongsong and Zhou, Pan},
  journal={arXiv preprint arXiv:2505.04974},
  year={2025}
}

@article{lu2022dpm,
  title={Dpm-solver: A fast ode solver for diffusion probabilistic model sampling in around 10 steps},
  author={Lu, Cheng and Zhou, Yuhao and Bao, Fan and Chen, Jianfei and Li, Chongxuan and Zhu, Jun},
  journal={Advances in neural information processing systems},
  volume={35},
  pages={5775--5787},
  year={2022}
}

@inproceedings{li2025lamp,
  title={La{MP}: Language-Motion Pretraining for Motion Generation, Retrieval, and Captioning},
  author={Zhe Li and Weihao Yuan and Yisheng HE and Lingteng Qiu and Shenhao Zhu and Xiaodong Gu and Weichao Shen and Yuan Dong and Zilong Dong and Laurence Tianruo Yang},
  booktitle={The Thirteenth International Conference on Learning Representations},
  year={2025},
}

@inproceedings{guo2022tm2t,
  title={Tm2t: Stochastic and tokenized modeling for the reciprocal generation of 3d human motions and texts},
  author={Guo, Chuan and Zuo, Xinxin and Wang, Sen and Cheng, Li},
  booktitle={European Conference on Computer Vision},
  pages={580--597},
  year={2022},
  organization={Springer}
}

@inproceedings{zhang2023generating,
  title={Generating human motion from textual descriptions with discrete representations},
  author={Zhang, Jianrong and Zhang, Yangsong and Cun, Xiaodong and Zhang, Yong and Zhao, Hongwei and Lu, Hongtao and Shen, Xi and Shan, Ying},
  booktitle={Proceedings of the IEEE/CVF conference on computer vision and pattern recognition},
  pages={14730--14740},
  year={2023}
}

@inproceedings{zhang2023remodiffuse,
  title={Remodiffuse: Retrieval-augmented motion diffusion model},
  author={Zhang, Mingyuan and Guo, Xinying and Pan, Liang and Cai, Zhongang and Hong, Fangzhou and Li, Huirong and Yang, Lei and Liu, Ziwei},
  booktitle={Proceedings of the IEEE/CVF International Conference on Computer Vision},
  pages={364--373},
  year={2023}
}

@inproceedings{zhong2023attt2m,
  title={Attt2m: Text-driven human motion generation with multi-perspective attention mechanism},
  author={Zhong, Chongyang and Hu, Lei and Zhang, Zihao and Xia, Shihong},
  booktitle={Proceedings of the IEEE/CVF International Conference on Computer Vision},
  pages={509--519},
  year={2023}
}

@inproceedings{motionmaba,
  title={Motion mamba: Efficient and long sequence motion generation},
  author={Zhang, Zeyu and Liu, Akide and Reid, Ian and Hartley, Richard and Zhuang, Bohan and Tang, Hao},
  booktitle={European Conference on Computer Vision},
  pages={265--282},
  year={2024},
  organization={Springer}
}

@inproceedings{Huang2024CoMo,
  author = {Huang, Yiming and Wan, Weilin and Yang, Yue and Callison-Burch, Chris and Yatskar, Mark and Liu, Lingjie},
  title = {CoMo: Controllable Motion Generation Through Language Guided Pose Code Editing},
  year = {2024},
  isbn = {978-3-031-73396-3},
  publisher = {Springer-Verlag},
  address = {Berlin, Heidelberg},
  doi = {10.1007/978-3-031-73397-0_11},
  booktitle = {Computer Vision – ECCV 2024: 18th European Conference, Milan, Italy, September 29–October 4, 2024, Proceedings, Part XXIX},
  pages = {180–196},
  numpages = {17},
  location = {Milan, Italy}
}

@article{zou2024parco,
  title={ParCo: Part-Coordinating Text-to-Motion Synthesis},
  author={Zou, Qiran and Yuan, Shangyuan and Du, Shian and Wang, Yu and Liu, Chang and Xu, Yi and Chen, Jie and Ji, Xiangyang},
  journal={arXiv preprint arXiv:2403.18512},
  year={2024}
}

@inproceedings{petrovich2022temos,
  title={TEMOS: Generating diverse human motions from textual descriptions},
  author={Petrovich, Mathis and Black, Michael J and Varol, G{\"u}l},
  booktitle={European Conference on Computer Vision},
  pages={480--497},
  year={2022},
  organization={Springer}
}

@article{xue2025dancegrpo,
  title   = {DanceGRPO: Unleashing GRPO on Visual Generation},
  author  = {Xue, Zeyue and Wu, Jie and Gao, Yu and Kong, Fangyuan and Zhu, Lingting and Chen, Mengzhao and Liu, Zhiheng and Liu, Wei and Guo, Qiushan and Huang, Weilin and others},
  journal = {arXiv preprint arXiv:2505.07818},
  year    = {2025}
}

@article{liu2025flow,
  title   = {Flow-grpo: Training flow matching models via online rl},
  author  = {Liu, Jie and Liu, Gongye and Liang, Jiajun and Li, Yangguang and Liu, Jiaheng and Wang, Xintao and Wan, Pengfei and Zhang, Di and Ouyang, Wanli},
  journal = {arXiv preprint arXiv:2505.05470},
  year    = {2025}
}

@article{mu2025smp,
  title={SMP: Reusable Score-Matching Motion Priors for Physics-Based Character Control},
  author={Mu, Yuxuan and Zhang, Ziyu and Shi, Yi and Matsumoto, Minami and Imamura, Kotaro and Tevet, Guy and Guo, Chuan and Taylor, Michael and Shu, Chang and Xi, Pengcheng and others},
  journal={arXiv preprint arXiv:2512.03028},
  year={2025}
}

@inproceedings{lipman2023flow,
  title={Flow Matching for Generative Modeling},
  author={Lipman, Yaron and Chen, Ricky T. Q. and Ben-Hamu, Heli and Nickel, Maximilian and Le, Matt},
  booktitle={The Eleventh International Conference on Learning Representations},
  year={2023},
}

@article{jiang2023motiongpt,
  title={MotionGPT: Human Motion as a Foreign Language},
  author={Jiang, Biao and Chen, Xin and Liu, Wen and Yu, Jingyi and Yu, Gang and Chen, Tao},
  journal={Advances in Neural Information Processing Systems},
  volume={36},
  pages={20067--20079},
  year={2023}
}

@article{meng2025absolute,
    title={Absolute Coordinates Make Motion Generation Easy},
    author={Meng, Zichong and Han, Zeyu and Peng, Xiaogang and Xie, Yiming and Jiang, Huaizu},
    journal={arXiv preprint arXiv:2505.19377},
    year={2025}
  }

@inproceedings{barquero2024seamless,
  title={Seamless Human Motion Composition with Blended Positional Encodings},
  author={Barquero, German and Escalera, Sergio and Palmero, Cristina},
  booktitle={Proceedings of the IEEE/CVF Conference on Computer Vision and Pattern Recognition},
  year={2024}
}

@article{ziegler2019fine,
  title={Fine-tuning language models from human preferences},
  author={Ziegler, Daniel M and Stiennon, Nisan and Wu, Jeffrey and Brown, Tom B and Radford, Alec and Amodei, Dario and Christiano, Paul and Irving, Geoffrey},
  journal={arXiv preprint arXiv:1909.08593},
  year={2019}
}

@article{schulman2017proximal,
  title={Proximal policy optimization algorithms},
  author={Schulman, John and Wolski, Filip and Dhariwal, Prafulla and Radford, Alec and Klimov, Oleg},
  journal={arXiv preprint arXiv:1707.06347},
  year={2017}
}

@inproceedings{fan2025go,
  title={Go to zero: Towards zero-shot motion generation with million-scale data},
  author={Fan, Ke and Lu, Shunlin and Dai, Minyue and Yu, Runyi and Xiao, Lixing and Dou, Zhiyang and Dong, Junting and Ma, Lizhuang and Wang, Jingbo},
  booktitle={Proceedings of the IEEE/CVF International Conference on Computer Vision},
  pages={13336--13348},
  year={2025}
}

@article{zhao2024dartcontrol,
  title={DartControl: A diffusion-based autoregressive motion model for real-time text-driven motion control},
  author={Zhao, Kaifeng and Li, Gen and Tang, Siyu},
  journal={arXiv preprint arXiv:2410.05260},
  year={2024}
}

@article{hymotion2025,
  title={HY-Motion 1.0: Scaling Flow Matching Models for Text-To-Motion Generation},
  author={Tencent Hunyuan 3D Digital Human Team},
  journal={arXiv preprint arXiv:2512.23464},
  year={2025}
}

@article{pavllo2018quaternet,
  title={Quaternet: A quaternion-based recurrent model for human motion},
  author={Pavllo, Dario and Grangier, David and Auli, Michael},
  journal={arXiv preprint arXiv:1805.06485},
  year={2018}
}

@inproceedings{martinez2017human,
  title={On Human Motion Prediction Using Recurrent Neural Networks},
  author={Martinez, Julieta and Black, Michael J. and Romero, Javier},
  booktitle={Proceedings of the IEEE/CVF Conference on Computer Vision and Pattern Recognition (CVPR)},
  pages={2891--2900},
  year={2017}
}

@article{holden2017phase,
  title={Phase-Functioned Neural Networks for Character Control},
  author={Holden, Daniel and Komura, Taku and Saito, Jun},
  journal={ACM Transactions on Graphics (TOG)},
  volume={36},
  number={4},
  pages={1--13},
  year={2017},
  publisher={ACM}
}

@inproceedings{petrovich2021actor,
  title={Action-Conditioned 3D Human Motion Synthesis with Transformer VAE},
  author={Petrovich, Mathis and Black, Michael J. and Varol, G{\"u}l},
  booktitle={Proceedings of the IEEE/CVF International Conference on Computer Vision (ICCV)},
  pages={10985--10995},
  year={2021}
}

@misc{shen2025finextrolcontrollablemotiongeneration,
      title={FineXtrol: Controllable Motion Generation via Fine-Grained Text}, 
      author={Keming Shen and Bizhu Wu and Junliang Chen and Xiaoqin Wang and Linlin Shen},
      year={2025},
      eprint={2511.18927},
      archivePrefix={arXiv},
      primaryClass={cs.CV},
}

@misc{tan2026consistentrftreducingvisualhallucinations,
      title={ConsistentRFT: Reducing Visual Hallucinations in Flow-based Reinforcement Fine-Tuning}, 
      author={Xiaofeng Tan and Jun Liu and Yuanting Fan and Bin-Bin Gao and Xi Jiang and Xiaochen Chen and Jinlong Peng and Chengjie Wang and Hongsong Wang and Feng Zheng},
      year={2026},
      eprint={2602.03425},
      archivePrefix={arXiv},
      primaryClass={cs.CV},
}

@article{tan2024frequency,
  title={Frequency-Guided Diffusion Model with Perturbation Training for Skeleton-Based Video Anomaly Detection},
  author={Tan, Xiaofeng and Wang, Hongsong and Geng, Xin and Wang, Liang},
  journal={arXiv preprint arXiv:2412.03044},
  year={2024}
}

@inproceedings{pavlakos2019expressive,
  title={Expressive body capture: 3d hands, face, and body from a single image},
  author={Pavlakos, Georgios and Choutas, Vasileios and Ghorbani, Nima and Bolkart, Timo and Osman, Ahmed AA and Tzionas, Dimitrios and Black, Michael J},
  booktitle={Proceedings of the IEEE/CVF conference on computer vision and pattern recognition},
  pages={10975--10985},
  year={2019}
}

@article{zheng2025diffusionnft,
  title={DiffusionNFT: Online Diffusion Reinforcement with Forward Process},
  author={Zheng, Kaiwen and Chen, Huayu and Ye, Haotian and Wang, Haoxiang and Zhang, Qinsheng and Jiang, Kai and Su, Hang and Ermon, Stefano and Zhu, Jun and Liu, Ming-Yu},
  journal={arXiv preprint arXiv:2509.16117},
  year={2025}
}

@inproceedings{kim2022diffusionclip,
  title={Diffusionclip: Text-guided diffusion models for robust image manipulation},
  author={Kim, Gwanghyun and Kwon, Taesung and Ye, Jong Chul},
  booktitle={Proceedings of the IEEE/CVF conference on computer vision and pattern recognition},
  pages={2426--2435},
  year={2022}
}

@inproceedings{zhou2025golden,
  title={Golden noise for diffusion models: A learning framework},
  author={Zhou, Zikai and Shao, Shitong and Bai, Lichen and Zhang, Shufei and Xu, Zhiqiang and Han, Bo and Xie, Zeke},
  booktitle={Proceedings of the IEEE/CVF International Conference on Computer Vision},
  pages={17688--17697},
  year={2025}
}

@article{wang2025foundation,
  title={Foundation model for skeleton-based human action understanding},
  author={Wang, Hongsong and Weng, Wanjiang and Wang, Junbo and Zhao, Fang and Xie, Guo-Sen and Geng, Xin and Wang, Liang},
  journal={IEEE Transactions on Pattern Analysis and Machine Intelligence},
  year={2025},
  publisher={IEEE}
}

@inproceedings{wang2025heterogeneous,
  title={Heterogeneous skeleton-based action representation learning},
  author={Wang, Hongsong and Ma, Xiaoyan and Kuang, Jidong and Gui, Jie},
  booktitle={Proceedings of the Computer Vision and Pattern Recognition Conference},
  pages={19154--19164},
  year={2025}
}

@inproceedings{weng2025usdrl,
  title={Usdrl: Unified skeleton-based dense representation learning with multi-grained feature decorrelation},
  author={Weng, Wanjiang and Wang, Hongsong and Wang, Junbo and He, Lei and Xie, Guo-Sen},
  booktitle={Proceedings of the AAAI Conference on Artificial Intelligence},
  volume={39},
  number={8},
  pages={8332--8340},
  year={2025}
}

@article{tan2026easytune,
  title={EasyTune: Efficient Step-Aware Fine-Tuning for Diffusion-Based Motion Generation},
  author={Tan, Xiaofeng and Weng, Wanjiang and Lei, Haodong and Wang, Hongsong},
  journal={arXiv preprint arXiv:2602.07967},
  year={2026}
}

@article{wu2025generalization,
  title={On the generalization of sft: A reinforcement learning perspective with reward rectification},
  author={Wu, Yongliang and Zhou, Yizhou and Ziheng, Zhou and Peng, Yingzhe and Ye, Xinyu and Hu, Xinting and Zhu, Wenbo and Qi, Lu and Yang, Ming-Hsuan and Yang, Xu},
  journal={arXiv preprint arXiv:2508.05629},
  year={2025}
}

@article{Zhu2024Human,
author = {Zhu, Wentao and Ma, Xiaoxuan and Ro, Dongwoo and Ci, Hai and Zhang, Jinlu and Shi, Jiaxin and Gao, Feng and Tian, Qi and Wang, Yizhou},
title = {Human Motion Generation: A Survey},
year = {2024},
issue_date = {April 2024},
publisher = {IEEE Computer Society},
address = {USA},
volume = {46},
number = {4},
issn = {0162-8828},
doi = {10.1109/TPAMI.2023.3330935},
journal = {IEEE Trans. Pattern Anal. Mach. Intell.},
month = apr,
pages = {2430-2449},
numpages = {20}
}

@ARTICLE{Ma2025Efficient,
  author={Ma, Zhiyuan and Zhang, Yuzhu and Jia, Guoli and Zhao, Liangliang and Ma, Yichao and Ma, Mingjie and Liu, Gaofeng and Zhang, Kaiyan and Ding, Ning and Li, Jianjun and Zhou, Bowen},
  journal={IEEE Transactions on Pattern Analysis and Machine Intelligence}, 
  title={Efficient Diffusion Models: A Comprehensive Survey From Principles to Practices}, 
  year={2025},
  volume={47},
  number={9},
  pages={7506-7525},
  doi={10.1109/TPAMI.2025.3569700}}
\vspace{-0.3in}
\begin{IEEEbiography}
	[{\includegraphics[width=1in,height=1.25in,clip,keepaspectratio]{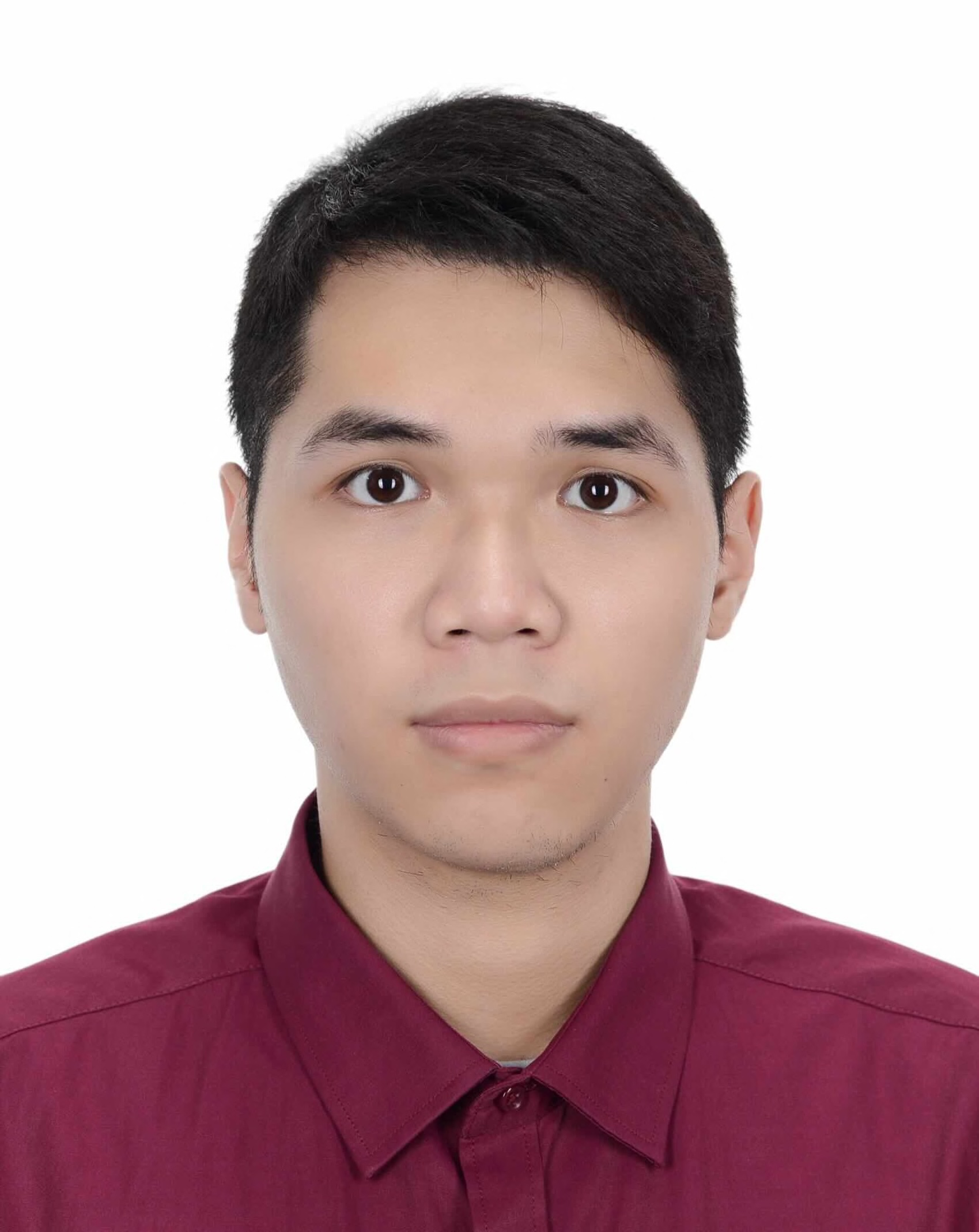}}]{Xiaofeng Tan}
	is a master's student in the Department of Computer Science and Engineering, Southeast University, Nanjing, China. He received the B.S. degree from Shenzhen University, Shenzhen, China, in 2024. His research interests include human motion generation, reinforcement learning from human feedback, and AI-generated content. 
\end{IEEEbiography}
\vspace{-0.3in}
\begin{IEEEbiography}
	[{\includegraphics[width=1in,height=1.25in,clip,keepaspectratio]{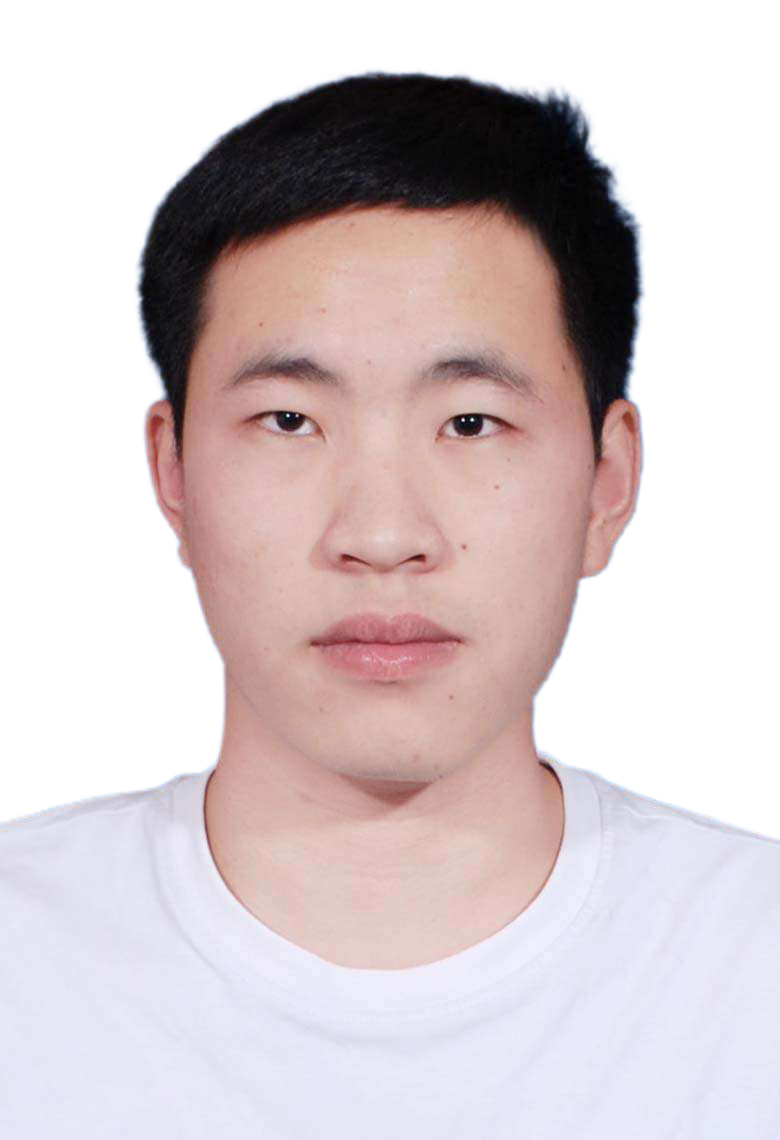}}]{Wanjiang Weng}
	is a master student with Department of Computer Science and Engineering, Southeast University, Nanjing, China. He received the bachelor degree from Anhui University in 2023. His research focuses on human action understanding, motion generation, continual learning, self-supervised learning and reinforcement learning. 
\end{IEEEbiography}
\vspace{-0.3in}
\begin{IEEEbiography}
	[{\includegraphics[width=1in,height=1.25in,clip,keepaspectratio]{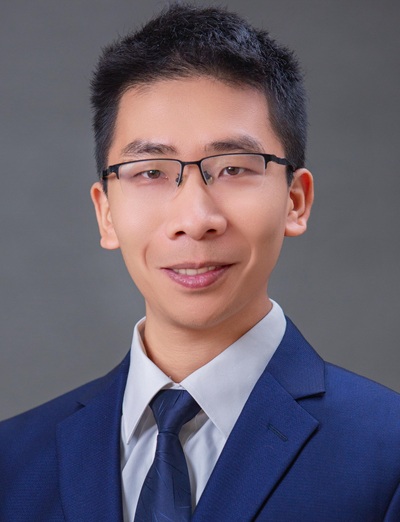}}]{Hongsong Wang}
is currently an Associate Professor with Department of Computer Science and Engineering, Southeast University, Nanjing, China. He received Ph.D. degree in Pattern Recognition and Intelligent Systems from Institute of Automation, University of Chinese Academy of Sciences in 2018. He was a postdoctoral fellow at National University of Singapore in 2019. He was a research associate at Inception Institute of Artificial Intelligence, Abu Dhabi, UAE in 2020. His research interests include human action understanding, human motion modeling, motion genneration, and etc.
\end{IEEEbiography}
\vspace{-0.3in}
\begin{IEEEbiography}
	[{\includegraphics[width=1in,height=1.25in,clip,keepaspectratio]{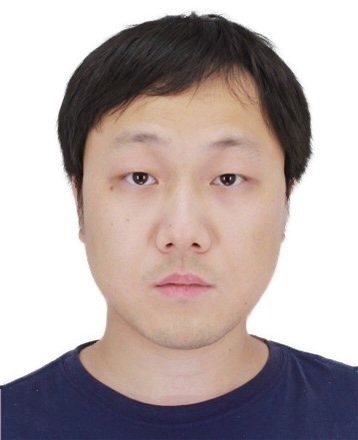}}]{Fang Zhao}
received the Ph.D. degree from the National Laboratory of Pattern Recognition, Institute of Automation, Chinese Academy of Sciences, Beijing, China, in 2015. From October 2021 to June 2023, he worked as a Senior Researcher with Tencent AI Lab, Shenzhen, China. He is currently a Associate Professor with the School of Intelligence Science and Technology, Nanjing University, Suzhou, China. His research interests include computer vision and machine learning.
\end{IEEEbiography}
\vspace{-0.3in}
\begin{IEEEbiography}
	[{\includegraphics[width=1in,height=1.25in,clip,keepaspectratio]{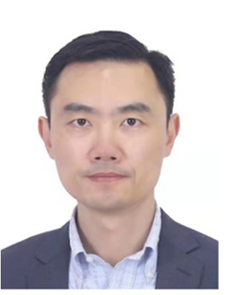}}]{Xin Geng (Senior Member, IEEE)} is currently a Professor and the Director of the PALM Laboratory, Southeast University, China. His research interests include pattern recognition, machine learning, and computer vision. He has been an Associate Editor or a Guest Editor of several international journals, such as FCS, PRL, and IJPRAI. He has served as a Program Committee Chair of several international/national conferences and a Program Committee Member for a number of top international conferences, such as IJCAI, NIPS, CVPR, ICCV, AAAI, ACMMM, and ECCV.
\end{IEEEbiography}
\vspace{-0.3in}
\begin{IEEEbiography}
	[{\includegraphics[width=1in,height=1.25in,clip,keepaspectratio]{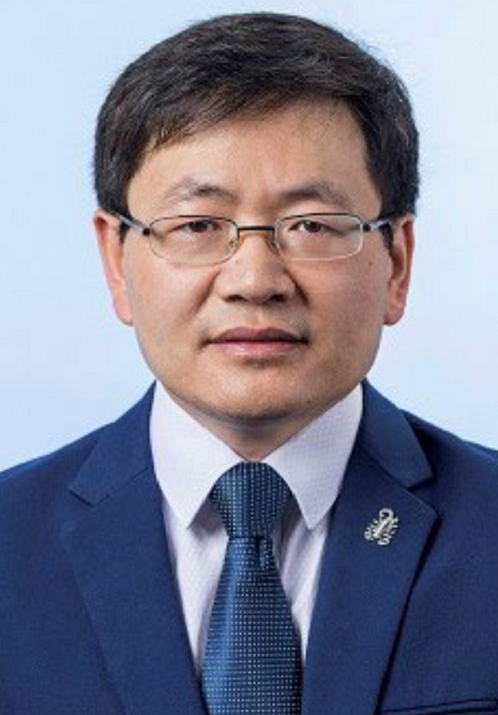}}]{Liang Wang (Fellow, IEEE)}
	received the Ph.D. degree from the Institute of Automation, Chinese Academy of Sciences (CASIA), in 2004. He is currently a Full Professor with the National Laboratory of Pattern Recognition, CASIA. He has widely published in highly ranked international journals, such as IEEE TRANSACTIONS ON PATTERN ANALYSIS AND MACHINE INTELLIGENCE and IEEE TRANSACTIONS ON IMAGE PROCESSING, and leading international conferences, such as CVPR, ICCV, and ICDM. His major research interests include machine learning, pattern recognition, and computer vision. 
\end{IEEEbiography}

\end{document}